\documentclass[a4,11pt]{article}
\usepackage{marginnote}
\usepackage{xcolor}
\usepackage{float}
\usepackage[subrefformat=parens]{subcaption}
\usepackage{amssymb}
\usepackage{amstext}
\usepackage{amsmath}
\usepackage{amscd}
\usepackage{amsthm}
\usepackage{amsfonts}
\usepackage{enumerate}
\usepackage{graphicx}
\usepackage{latexsym}
\usepackage{mathrsfs}
\usepackage{mathtools}
\usepackage{cases}
\usepackage{verbatim}
\usepackage{bm}
\usepackage{tikz}
\usepackage{caption}
\usepackage[letterpaper,top=2cm,bottom=2cm,left=3cm,right=3cm,marginparwidth=2cm]{geometry}
\usepackage{authblk}
\usepackage{hyperref} 
    \hypersetup{
    	colorlinks = true,
    	linkcolor = blue,
    	anchorcolor = blue,
    	citecolor = blue,
    	filecolor = blue,
    	urlcolor = blue
    }
\usepackage{cleveref}
\usepackage{tikz}
\usetikzlibrary{matrix}
\usepackage{hyperref} 
    \hypersetup{
    	colorlinks = true,
    	linkcolor = blue,
    	anchorcolor = blue,
    	citecolor = blue,
    	filecolor = blue,
    	urlcolor = blue
    }
\usepackage{comment}
\usepackage{tikz-cd}
\usepackage{pb-diagram}
\usepackage[colorinlistoftodos]{todonotes}

\newtheorem{assumption}{Assumption}
\newtheorem{lemma}{Lemma}
\newtheorem{theorem}{Theorem}
\newtheorem{corollary}{Corollary}
\newtheorem{proposition}{Proposition}
\newtheorem{definition}{Definition}

\newtheorem{remark}{Remark}

\newcommand{\Takashi}[1]{{\color{red}[\textit{#1} Takashi]}}

\title{{\bf Quantitative Approximation for
Neural Operators in Nonlinear Parabolic Equations}}


\author[1, a, *]{\rm Takashi Furuya}
\author[2, b, *]{\rm Koichi Taniguchi}
\author[3, c]{\rm Satoshi Okuda}

\affil[1]{{\small Education and Research Center for Mathematical and Data Science, Shimane University, Matsue, Shimane 690-8504, Japan}}
\affil[a]{{\small Email: takashi.furuya0101@gmail.com}\vspace{2mm}}

\affil[2]{{\small Department of Mathematical and Systems Engineering,
Faculty of Engineering, Shizuoka University, Hamamatsu, Shizuoka 432-8561, Japan}}
\affil[b]{{\small Email: taniguchi.koichi@shizuoka.ac.jp}\vspace{2mm}}

\affil[3]{{\small Department of Physics, Rikkyo University, Toshima, Tokyo 171-8501, Japan}}
\affil[c]{{\small Email: okudas@rikkyo.ac.jp}\vspace{2mm}}

\affil[*]{{\footnotesize These two authors contributed equally to this work}}

\date{}
\begin{document}

\maketitle

\begin{abstract} 
Neural operators serve as universal approximators for general continuous operators. In this paper, we derive the approximation rate of solution operators for the nonlinear parabolic partial differential equations (PDEs), contributing to the quantitative approximation theorem for solution operators of nonlinear PDEs. Our results show that neural operators can efficiently approximate these solution operators without the exponential growth in model complexity, thus strengthening the theoretical foundation of neural operators. A key insight in our proof is to transfer PDEs into the corresponding integral equations via Duahamel's principle, and to leverage the similarity between neural operators and Picard’s iteration—a classical algorithm for solving PDEs. This approach is potentially generalizable beyond parabolic PDEs to a range of other equations, including the Navier-Stokes equation,  nonlinear Schr\"odinger equations and nonlinear wave equations, which can be solved by Picard's iteration.
\end{abstract}



\section{Introduction}\label{sec:1}

Neural operators have gained significant attention in deep learning as an extension of traditional neural networks. While conventional neural networks are designed to learn mappings between finite-dimensional spaces, neural operators extend this capability by learning mappings between infinite-dimensional function spaces. 
A key application of neural operators is in constructing surrogate models of solvers for partial differential equations (PDEs) by learning their solution operators. Traditional PDE solvers often require substantial computational resources and time, especially when addressing problems with high dimensionality, nonlinearity, or complex boundary shape. In contrast, once trained, neural operators serve as surrogate models, providing significantly faster inference compared to traditional numerical solvers. 
Neural operators are recognized as universal approximators for general continuous operators. However, the theoretical understanding of their approximation capabilities, particularly as solvers for PDEs, is not yet fully developed. 

This paper focuses on whether neural operators are suitable for approximating solution operators of PDEs, and which neural operator architectures might be effective for this purpose. The idea of this paper is to define neural operators by aligning them with Picard's iteration, a classical method for solving PDEs. Specifically, by associating each forward pass of the neural operator’s layers with a Picard's iteration step, we hypothesize that increasing the number of layers would naturally lead to an approximate solution of the PDE. We constructively prove a quantitative approximation theorem for solution operators of PDEs based on this idea. This theorem shows that, by appropriately selecting the basis functions within the neural operator, it is possible to avoid the exponential growth in model complexity (also called ``the curse of parametric complexity") that often arises in general operator approximation, thereby providing a theoretical justification for the effectiveness of neural operators as PDE solvers.

\subsection{Related works}\label{sub:1.1}

Neural operators were introduced by \cite{kovachki2023neural} as one of the operator learning methods, such as DeepONet \cite{chen1995universal, lu2019deeponet} and PCA-Net \cite{bhattacharya2021model}. Various architectures have been proposed, including Graph Neural Operators \cite{li2020neural}, Fourier Neural Operators \cite{li2020fourier}, Wavelet Neural Operators \cite{tripura2022wavelet, gupta2021multiwavelet}, Spherical Fourier Neural Operators \cite{bonev2023spherical}, and Laplace Neural Operators \cite{chen2023learning}. These architectures have demonstrated empirical success as the surrogate models of simulators across a wide range of PDEs, as benchmarked in \cite{takamoto2022pdebench}.
In the case of parabolic PDEs, which are the focus of this paper, neural operators have also shown promising results, for example, 
the Burgers, Darcy, Navier-Stokes equations \cite{kovachki2023neural}, 
the KPP-Fisher equation \cite{takamoto2022pdebench}, the Allen-Cahn equation \cite{tripura2022wavelet, navaneeth2024physics}, and the Nagumo equation \cite{navaneeth2024physics}. 

Universal approximation theorems of operator learning for general operators were established for neural operators \cite{kovachki2021universal, kovachki2023neural, lanthaler2023nonlocal, kratsios2024mixture}, DeepOnet \cite{lu2019deeponet, lanthaler2022error}, and PCA-net \cite{lanthaler2023operator}. This indicates that these learning methods possess the capabilities to approximate a wide range of operators. However, operator learning for general operators suffers from “the curse of parametric complexity”, where the number of learnable parameters exponentially grows as the desired approximation accuracy increases \cite{lanthaler2023curse}.

A common approach to mitigating the curse of parametric complexity is to restrict general operators to the solution operators of PDEs. Recently, several quantitative approximation theorems have been established for the solution operators of specific PDEs without experiencing exponential growth in model complexity. For instance, \cite{kovachki2021universal, lanthaler2023operator} developed quantitative approximation theorems for the Darcy and Navier-Stokes equations using Fourier neural operators and PCA-Net, respectively. Additionally, \cite{lanthaler2023curse} addressed the Hamilton-Jacobi equations with Hamilton-Jacobi neural operators. Further studies, such as \cite{chen2023deep, lanthaler2022error, marcati2023exponential}, investigated quantitative approximation theorems using DeepONet for a range of PDEs, including elliptic, parabolic, and hyperbolic equations, while \cite{deng2021convergence} focused on advection-diffusion equations. 

In line with these researches, the present paper also concentrates on restricting the learning operator to the solution operators of specific PDEs, namely parabolic PDEs. However, unlike previous studies, this work leverages the similarity with the Picard's iteration in the framework of relatively general neural operator. This approach is potentially generalizable beyond parabolic PDEs to a range of other equations, including the Navier-Stokes equation, nonlinear dispersive equations, and nonlinear hyperbolic equations, which are solvable by Picard’s iteration.
See \cite{kovachki2024operator} for other directions on the quantitative approximation of operator learning.

\subsection{Our results and contributions}\label{sub:1.2}

In this paper, we present a quantitative approximation theorem for the solution operator of nonlinear parabolic PDEs using neural operators. Notably, in Theorem \ref{thm:quantitative-UAT-heat}, we show that for any given accuracy, the depth and the number of neurons of the neural operators do not grow exponentially.

The proof relies on Banach's fixed point theorem. Under appropriate conditions, the solution to nonlinear PDEs can be expressed as the fixed point of a contraction mapping, which can be implemented through Picard's iteration. This contraction mapping corresponds to an integral operator whose kernel is the Green function associated with the linear equation. By expanding the Green function in a certain basis and truncating the expansion, we approximate the contraction mapping using a layer of neural operator. In this framework, the forward propagation through the layers is interpreted as steps in Picard's iteration.
Our approach does not heavily rely on the universality of neural networks. More specifically, we utilize their universality only to approximate the nonlinearity, which is represented as a one-dimensional nonlinear function, and hence, the approximation rates obtained in Theorem~\ref{thm:quantitative-UAT-heat} correspond to the one-dimensional approximation rates of universality. 
As a result, our approach demonstrates that exponential growth in model complexity of our neural operators can be avoided.

\subsection{Notation}\label{sub:1.3}

We introduce the notation often used in this paper. 
\begin{itemize}
\item For $r \in [1,\infty]$, we write the H\"older conjugate of $r$ as $r'$: $1/r+1/r'=1$ if $r \in (1, \infty)$; 
$r' = \infty$ if $r=1$; 
$r'=1$ if $r=\infty$.

\item Let $X$ be a set. 
For an operator $A : X \to X$ and $k \in \mathbb N$, we denote by $A^{[k]}$ the $k$ times compositions of $A$ (or the $k$ times products of $A$): $A^{[0]}$ means the identity operator on $X$ and 
$A^{[k]} := 
\underbrace{A \circ \cdots \circ A}_{\text{$k$ times}}.$

\item Let $X$ and $Y$ be normed spaces with norm $\|\cdot\|_X$ and $\|\cdot\|_Y$, respectively.
For a linear operator $A : X \to Y$, 
we denote by $\|A\|_{X\to Y}$ the operator norm of $A$:
$
\|A\|_{X\to Y}
:=
\sup_{\|f\|_{X}=1} \|Af\|_{Y}.$

\item Let $X$ be a normed space with norm $\|\cdot\|_X$. We denote by $B_X(R)$ the closed ball in $X$ with center $0$ and radius $R>0$:
$B_X(R) := \{ f \in X : \|f\|_{X}\le R\}.$

\item 
For $q \in [1,\infty]$, 
the Lebesgue space $L^q(D)$ is defined by the set of all measurable functions $f=f(x)$ on $D$ such that 
\[
\|f\|_{L^q} := 
\begin{dcases}
    \left(\int_D |f(x)|^q\, dx\right)^\frac{1}{q} < \infty \quad &\text{if }q\not =\infty,\\
    \underset{x\in D}{\operatorname{ess\ sup}}|f(x)| <\infty\quad &\text{if }q =\infty.
\end{dcases}
\]
For $r,s \in [1,\infty]$, the space $L^{r}(0,T ; L^{s}(D))$ is defined by the set of all measurable functions $f=f(t,x)$ on $(0,T)\times D$ such that 
\[
\| f\|_{L^{r}(0,T ; L^{s})} := \left\{
\int_0^T \left(\int_D |f(t,x)|^s\, dx\right)^\frac{r}{s}\, dt
\right\}^\frac{1}{r} < \infty
\]
(with the usual modifications for $r=\infty$ or $s=\infty$).

    \item For $r,s \in [1,\infty]$,
    the notation $\langle \cdot, \cdot \rangle$ means the dual pair of $L^{s'}(D)$ and $L^{s}(D)$ or $L^{r'}(0,T ; L^{s'}(D))$ and $L^{r}(0,T ; L^{s}(D))$:
    \[
    \langle u, v \rangle := \int_D u(x) v(x) \, dx\quad \text{or}\quad \langle u, v \rangle := \int_0^T \int_D u(t,x) v(t,x) \, dxdt,
    \]
    respectively.
\end{itemize}

\section{Local well-posedness for nonlinear parabolic equations}\label{sec:2}

To begin with, we describe the problem setting of PDEs addressed in this paper.
Let $D$ be a bounded domain in $\mathbb{R}^d$ with $d\in \mathbb N$.
We consider 
the Cauchy problem for the following nonlinear parabolic PDEs:
\begin{equation}
\label{eq:nonlinear-para}\tag{P}
\begin{cases}
\partial_t u + \mathcal{L}u = F(u)\quad &\text{in }(0,T)\times D,\\
u(0)=u_0\quad &\text{in } D,
\end{cases}
\end{equation}
where $T>0$, $\partial_t := \partial/\partial t$ denotes the time derivative, 
$\mathcal{L}$ is a certain operator (e.g. $\mathcal{L} = -\Delta := -\sum_{j=1}^d \partial^2/\partial x_j^2$ is the Laplacian), 
$F : \mathbb R \to \mathbb R$ is a nonlinearity, 
$u : (0,T)\times D \to \mathbb R$ is a solution to \eqref{eq:nonlinear-para} (unknown function), and $u_0 : D \to \mathbb R$ is an initial data (prescribed function). 
It is important to note that boundary conditions are contained in the operator $\mathcal{L}$. 
In this sense, the problem \eqref{eq:nonlinear-para} can be viewed as an abstract initial boundary value problem on $D$. See Appendix \ref{app:A} for examples of $\mathcal{L}$.

As it is a standard practice, we study 
the problem \eqref{eq:nonlinear-para} via the integral formulation
\begin{equation}\label{eq.integral}\tag{P'}
    u(t) = S_{\mathcal{L}}(t)u_0 + \int_0^t S_{\mathcal{L}}(t-\tau) F(u(\tau))\, d\tau,
\end{equation}
where $\{S_{\mathcal{L}}(t)\}_{t\ge 0}$ is the semigroup
generated by $\mathcal{L}$ (i.e. the solution operators of the linear equation $\partial_t u + \mathcal{L}u = 0$). 
The second term in the right hand side of \eqref{eq.integral} is 
commonly referred to as
the Duhamel's integral. 
Under suitable conditions on $\mathcal{L}$ and $F$, the problems \eqref{eq:nonlinear-para} and \eqref{eq.integral} are equivalent if the function $u$ is sufficiently smooth.

The aim of this section is to state the results on local well-posedness (LWP) for \eqref{eq.integral}. Here, the LWP means the
existence of local in time solution, the uniqueness of the solution, and the continuous dependence on initial data.
Its proof is based on the fixed point argument (or also called the contraction mapping argument).
These results are fundamental to study our neural operator in Sections \ref{sec:3} and \ref{sec:quantitative-approximation}.
In particular, Propositions \ref{prop:LWP} and \ref{prop:picard} below serve as guidelines for setting function spaces as the domain and range of neural operators in Definition \ref{def:neural-operator} (Section~\ref{sec:3}) and for determining the norm to measure the error in Theorem \ref{thm:quantitative-UAT-heat} (Section \ref{sec:quantitative-approximation}).

In this paper we impose the following assumptions on $\mathcal{L}$ and $F$. 

\begin{assumption}\label{ass:L1}
For any $1\le q_1 \le q_2 \le \infty$,
there exists a constant $C_{\mathcal L}>0$ such that   
    \begin{equation}\label{heat-est}
 \|S_{\mathcal{L}}(t)\|_{L^{q_1} \to L^{q_2}} \le C_{\mathcal{L}} t^{-\nu(\frac{1}{q_1}-\frac{1}{q_2})},\quad t \in (0,1],
\end{equation}
for some $\nu>0$.
\end{assumption}

\begin{assumption}\label{ass:F}
$F \in C^1 (\mathbb R ; \mathbb R)$ satisfies $F(0)=0$ and 
    \begin{equation}\label{eq.F}
    |F(z_1) - F(z_2)|
    \le C_F \max_{i=1,2}|z_i|^{p-1} |z_1-z_2|,
    \end{equation}
    for any $z_1,z_2\in \mathbb R$ and for some $p>1$ and $C_F>0$.
\end{assumption}

\begin{remark}
    The range $(0,1]$ of $t$ in \eqref{heat-est} can be generalized to  $(0,T_{\mathcal{L}}]$, but it is assumed here to be $(0,1]$ for simplicity. This generalization is not essential, as the existence time $T$ of the solution is sufficiently small in the fixed point argument later. Long time solutions are achieved by repeatedly using the solution operator of \eqref{eq.integral} constructed in the fixed point argument (see also Subsection \ref{sub:5.1}).
\end{remark}

\begin{remark}
Typical examples of $\mathcal{L}$ are the Laplacian with the Dirichlet, Neumann, or Robin boundary condition, the Schr\"odinger operator, the elliptic operato,r and the higher-order Laplacian. See Appendix \ref{app:A} for the details. 
On the other hand, typical examples of the nonlinearity $F$ are $F(u) = \pm |u|^{p-1}u, \pm |u|^p$ (which can be regarded as the main term of the Taylor expansion of a more general nonlinearity $F$ if $F$ is smooth in some extent).
\end{remark}

Under the above assumptions,
the problem \eqref{P'} is local well-posed, where, as a solution space, we use the space $L^r(0,T; L^s(D))$ with 
the parameters $r,s$ satisfying
\begin{equation}\label{condi_rs}
    r,s \in [p,\infty]\quad \text{and}\quad 
    \frac{\nu}{s}+\frac{1}{r} < \frac{1}{p-1}.
\end{equation}
More precisely, we have the following result on LWP.

\begin{proposition}
\label{prop:LWP}
Assume that $r,s$ satisfy
\eqref{condi_rs}. Then, 
for any $u_0 \in L^\infty(D)$, there exist a time $T=T(u_0) \in (0,1]$ and a unique solution $u \in L^r(0,T; L^s(D))$ to \eqref{P'}. Moreover, for any $u_0, v_0 \in L^\infty(D)$, the solutions $u$ and $v$ to \eqref{P'} with $u(0)=u_0$ and $v(0)=v_0$ satisfy the continuous dependence on initial data: 
There exists a constant $C>0$ such that 
\[
\| u - v \|_{L^r(0,T' ; L^s)} \le C \|u_0 - v_0\|_{L^\infty},
\]
where $T'< \min\{T(u_0), T(v_0)\}$.
\end{proposition}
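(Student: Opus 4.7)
The strategy is the standard Banach fixed point / contraction mapping argument applied to the Duhamel map
\[
\Phi(u)(t) := S_{\mathcal L}(t) u_0 + \int_0^t S_{\mathcal L}(t-\tau) F(u(\tau))\, d\tau
\]
on a closed ball $B_R \subset L^r(0,T; L^s(D))$, for $T$ small and $R$ chosen in terms of $\|u_0\|_{L^\infty}$. The condition \eqref{condi_rs} is precisely what is needed to make all the forthcoming time integrals finite with a positive power of $T$ left over, which is what produces the contraction. I would first prove the two key a priori estimates, then run the fixed point argument, and finally subtract the integral identities for $u$ and $v$ to read off continuous dependence.

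\textbf{Linear term.} Since $D$ is bounded, $u_0 \in L^\infty(D) \subset L^s(D)$ with $\|u_0\|_{L^s} \le |D|^{1/s}\|u_0\|_{L^\infty}$. Applying Assumption~\ref{ass:L1} with $q_1=q_2=s$ and then integrating in $t$ gives
\[
\|S_{\mathcal L}(\cdot) u_0\|_{L^r(0,T; L^s)} \le C_{\mathcal L}|D|^{1/s}\, T^{1/r}\, \|u_0\|_{L^\infty}.
\]

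\textbf{Nonlinear term.} By Assumption~\ref{ass:F} with $z_2=0$, $|F(z)|\le C_F|z|^p$, so Hölder yields $\|F(u(\tau))\|_{L^{s/p}} \le C_F\|u(\tau)\|_{L^s}^p$. Applying Assumption~\ref{ass:L1} with $q_1=s/p$, $q_2=s$ (noting $s\ge p$ by \eqref{condi_rs}) gives
\[
\left\|\int_0^t S_{\mathcal L}(t-\tau)F(u(\tau))\,d\tau\right\|_{L^s}
\le C\int_0^t (t-\tau)^{-\alpha}\|u(\tau)\|_{L^s}^p\, d\tau,
\qquad \alpha := \tfrac{\nu(p-1)}{s}.
\]
Taking the $L^r(0,T)$ norm and using Young's convolution inequality with exponents $1+1/r = 1/q + p/r$, i.e.\ $1/q = 1 - (p-1)/r$, the kernel $\tau\mapsto \tau^{-\alpha}$ belongs to $L^q(0,T)$ precisely when $q\alpha<1$, which, after unwinding, is exactly $\nu/s + 1/r < 1/(p-1)$, i.e.\ the second half of \eqref{condi_rs}. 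This yields
\[
\left\|\int_0^\cdot S_{\mathcal L}(\cdot-\tau)F(u(\tau))\,d\tau\right\|_{L^r(0,T;L^s)}
\le C\, T^{\delta}\, \|u\|_{L^r(0,T;L^s)}^p,
\qquad \delta := 1 - (p-1)/r - \alpha > 0.
\]
An entirely parallel computation, starting from the pointwise inequality $|F(u)-F(v)|\le C_F(|u|^{p-1}+|v|^{p-1})|u-v|$ and the Hölder decomposition $p/s = (p-1)/s + 1/s$, gives
\[
\|\Phi(u)-\Phi(v)\|_{L^r(0,T;L^s)}\le C\, T^\delta\bigl(\|u\|^{p-1}+\|v\|^{p-1}\bigr)\|u-v\|_{L^r(0,T;L^s)}.
\]

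\textbf{Fixed point and continuous dependence.} Set $R := 2C_{\mathcal L}|D|^{1/s}T^{1/r}\|u_0\|_{L^\infty}$ and choose $T\in(0,1]$ so small that $CT^\delta R^{p-1}\le 1/2$. Then $\Phi$ maps $B_R$ into itself and is a $(1/2)$-contraction on $B_R$; Banach's fixed point theorem produces a unique fixed point $u\in B_R \subset L^r(0,T;L^s)$, which is the desired solution of \eqref{eq.integral}. Uniqueness inside the whole space $L^r(0,T;L^s)$ (not merely inside $B_R$) follows by the usual localization-in-time trick: if $\tilde u$ is any other solution, one restricts to a subinterval $[0,T'']$ on which $\tilde u \in B_R$ and applies the contraction inequality to force $u\equiv \tilde u$, then bootstraps. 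For continuous dependence, let $u,v$ solve \eqref{eq.integral} with data $u_0,v_0$ on $[0,T']$ with $T'<\min\{T(u_0),T(v_0)\}$; subtracting the Duhamel identities and applying the two estimates above gives
\[
\|u-v\|_{L^r(0,T';L^s)}\le C\,T'^{1/r}\|u_0-v_0\|_{L^\infty} + C\,T'^\delta\bigl(\|u\|^{p-1}+\|v\|^{p-1}\bigr)\|u-v\|_{L^r(0,T';L^s)},
\]
and, by shrinking $T'$ if necessary so that the second coefficient is $\le 1/2$, one absorbs the last term into the left and obtains the asserted Lipschitz bound.

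\textbf{Main obstacle.} No single step is deep, but the pivotal bookkeeping is the nonlinear estimate: matching the exponents $s/p \to s$ on the semigroup with the Young convolution exponent $q$ for the kernel $\tau^{-\alpha}$ so that \emph{both} $q\alpha<1$ and a strictly positive power $T^\delta$ remains. This is exactly where \eqref{condi_rs} is used, and it is what makes $T$ appear as a genuine smallness parameter producing the contraction.
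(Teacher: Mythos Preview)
Your proposal is correct and follows essentially the same route as the paper's own proof (Proposition~\ref{prop:fixed-point} and Corollary~\ref{cor:LWP} in Appendix~\ref{app:B}): the same $L^{s/p}\to L^s$ smoothing estimate on $S_{\mathcal L}$, the same Young convolution in $t$ with kernel $\tau^{-\nu(p-1)/s}$, and the same positive power of $T$ (your $\delta$ is exactly the paper's $\alpha+\beta$) that drives the contraction and yields continuous dependence by subtraction. The only cosmetic differences are that the paper fixes an arbitrary radius $M$ and shrinks $T$, whereas you tie $R$ to $T^{1/r}\|u_0\|_{L^\infty}$, and that the paper uses $L^\infty\to L^\infty$ followed by H\"older for the linear term rather than embedding $L^\infty\hookrightarrow L^s$ first; neither affects the argument.
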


The proof is based on the fixed point argument. Given an initial data $u_0 \in L^\infty(D)$ and $T,M>0$, we define the map $\Phi=\Phi_{u_0}$ by 
\begin{equation}\label{def:Phi}
\Phi[u] (t):= S_{\mathcal{L}}(t) u_0 + \int_0^t S_{\mathcal{L}}(t-\tau) F(u(\tau))\, d\tau
\end{equation}
for $t\in [0,T]$ and the complete metric space $X := B_{L^r(0,T; L^s(D))} (M)$ 
equipped with the metric 
\[
\mathrm{d} (u,v) := \|u-v\|_{L^r(0,T; L^s)} .
\]
Let $R,M>0$ be arbitrarily fixed, and let $T>0$ (which is taken sufficiently small later). 
Then, under Assumptions \ref{ass:L1} and \ref{ass:F}, it can be proved that 
for any $u_0 \in B_{L^\infty}(R)$, the map $\Phi : X \to X$ is $\delta$-contractive with a contraction rate $\delta \in (0,1)$, i.e.
\[
\mathrm{d} (\Phi[u],\Phi[v])
\le \delta\mathrm{d} (u,v)
\quad \text{for any $u,v \in X$},
\]
where $T$ is taken small enough to depend on $R$, $M$ and $\delta$ (not to depend on $u_0$ itself).
Therefore, Banach's fixed point theorem allows us to prove that there exists uniquely 
a function $u \in X$ such that $u=\Phi[u]$ (a fixed point). 
This function $u$ is precisely the solution of \eqref{eq.integral} with the initial data $u(0)=u_0$. Thus, Proposition \ref{prop:LWP} is shown. See Appendix \ref{app:B} for more details of the proof.

This proof by the fixed point argument  guarantees the following result (see e.g. \cite{Zeidler}).

\begin{proposition}\label{prop:picard}
    Assume that $r,s$ satisfy
\eqref{condi_rs}. Then, for any $R, M >0$ and for any $\delta\in (0,1)$, there exists a time $T\in (0,1]$, depending on $R,M$ and $\delta$, 
such that 
the following statements hold:
\begin{enumerate}[\rm (i)]
    \item There exists a unique solution operator $\Gamma^+ : 
B_{L^\infty}(R)\to B_{L^r(0,T; L^s)}(M)$
such that 
\[
\Gamma^+ (u_0) = u
\]
for any $u_0 \in B_{L^\infty}(R)$, where $u$ is the solution to \eqref{P'} with $u(0)=u_0$ given in Proposition~\ref{prop:LWP}.
    \item 
    Given $u_0 \in B_{L^\infty}(R)$, define Picard's iteration by 
\begin{equation}\label{Picard}
\begin{dcases}
    u^{(1)} := S_{\mathcal{L}}(t) u_0,\\
    u^{(\ell+1)} := \Phi[u^{(\ell)}] = u^{(1)} + 
    \int_0^t S_{\mathcal{L}}(t-\tau) F(u^{(\ell)}(\tau))\, d\tau,\quad \ell = 1,2,\cdots,
\end{dcases}
\end{equation}
that is,
\[
u^{(\ell)} := \Phi^{[\ell]}[0]= \underbrace{\Phi \circ \cdots \circ \Phi}_{\text{$\ell$ times}}[0],\quad \ell = 1,2,\cdots,
\]
where $\Phi : X \to X$ is a $\delta$-contraction mapping defined by \eqref{def:Phi}.
Then $u^{(\ell)} \to u$ in $L^r(0,T; L^s(D))$ as $\ell \to \infty$ and 
\[
\mathrm{d}(u^{(\ell)}, u) \le \frac{\delta^\ell}{1-\delta} \mathrm{d}(u^{(1)}, 0),\quad \ell =1,2,\cdots.
\]

\end{enumerate}
\end{proposition}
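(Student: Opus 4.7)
My plan is to derive Proposition \ref{prop:picard} as a direct consequence of Banach's fixed point theorem applied to the map $\Phi = \Phi_{u_0}$ from \eqref{def:Phi}, reusing the self-mapping and $\delta$-contraction estimates already sketched for Proposition~\ref{prop:LWP}. The crucial structural observation is that the time $T$ produced by that fixed-point argument depends on $u_0$ only through the quantity $\|u_0\|_{L^\infty}$, so it can be chosen uniformly over the ball $B_{L^\infty}(R)$.

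For part (i), I would fix $R,M>0$ and $\delta\in(0,1)$, and then choose $T = T(R,M,\delta)\in(0,1]$ small enough so that for every $u_0\in B_{L^\infty}(R)$ the map $\Phi_{u_0}$ sends $X := B_{L^r(0,T;L^s(D))}(M)$ into itself and satisfies $\mathrm{d}(\Phi_{u_0}[u],\Phi_{u_0}[v])\le \delta\,\mathrm{d}(u,v)$ on $(X,\mathrm{d})$. Since $X$ is a closed ball in the Banach space $L^r(0,T;L^s(D))$, it is complete, so Banach's theorem produces a unique fixed point $u\in X$ of $\Phi_{u_0}$, which by definition of $\Phi$ is the unique solution in $X$ to \eqref{eq.integral} with $u(0)=u_0$. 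Setting $\Gamma^+(u_0):=u$ yields the desired map $\Gamma^+:B_{L^\infty}(R)\to B_{L^r(0,T;L^s)}(M)$, and uniqueness within the larger space is already provided by Proposition \ref{prop:LWP}.

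For part (ii), I would note first that $0\in X$, so the self-mapping property inductively gives $u^{(\ell)}=\Phi^{[\ell]}[0]\in X$ for every $\ell$. Using $F(0)=0$ from Assumption~\ref{ass:F}, the first iterate reduces to
\[
u^{(1)}(t) = \Phi[0](t) = S_{\mathcal L}(t)u_0 + \int_0^t S_{\mathcal L}(t-\tau)F(0)\,d\tau = S_{\mathcal L}(t)u_0,
\]
matching \eqref{Picard}. Applying $\delta$-contractivity iteratively yields $\mathrm{d}(u^{(\ell+1)},u^{(\ell)})\le \delta^\ell\,\mathrm{d}(u^{(1)},0)$, and the triangle inequality then gives, for $k>\ell$,
\[
\mathrm{d}(u^{(k)},u^{(\ell)}) \le \sum_{j=\ell}^{k-1}\delta^j\,\mathrm{d}(u^{(1)},0) \le \frac{\delta^\ell}{1-\delta}\,\mathrm{d}(u^{(1)},0).
\]
Thus $\{u^{(\ell)}\}$ is Cauchy in the complete space $X$, converging to some $u^*\in X$; continuity of $\Phi$ forces $u^*=\Phi[u^*]$, so by the uniqueness in (i) we must have $u^*=\Gamma^+(u_0)=u$. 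Letting $k\to\infty$ in the telescoped estimate produces the stated bound $\mathrm{d}(u^{(\ell)},u)\le \delta^\ell(1-\delta)^{-1}\mathrm{d}(u^{(1)},0)$.

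There is no genuinely hard step here, since both claims are essentially corollaries of the standard Banach fixed point theorem; the only point requiring care is confirming the uniformity of $T$ in $u_0$, which amounts to revisiting the estimates in the proof of Proposition \ref{prop:LWP} and checking that all constants are controlled by $R$, $M$, and $\delta$ alone. Once that uniformity is verified, the rest is mechanical.
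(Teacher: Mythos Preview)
Your proposal is correct and follows exactly the approach indicated in the paper: the authors simply state that Proposition~\ref{prop:picard} is guaranteed by the fixed point argument used for Proposition~\ref{prop:LWP} (citing Zeidler for the standard Banach fixed point consequences), with the uniformity of $T$ over $B_{L^\infty}(R)$ implicit in the estimates of Proposition~\ref{prop:fixed-point} in Appendix~\ref{app:B}. Your write-up merely spells out what the paper leaves to the reader and the reference.
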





\section{Neural operator for nonlinear parabolic equations}\label{sec:3}

In this section, we aim to construct neural operators $\Gamma$
that serve as accurate approximation models of the solution operators $\Gamma^+$ 
 for nonlinear parabolic PDEs \eqref{eq:nonlinear-para}. 
We start by explaining our idea in rough form.
Our idea is inspired by the fixed point argument and Picard's iteration. 
By Section \ref{sec:2}, for any $u_0 \in B_{L^\infty}(R)$, the solution $u$ to \eqref{eq.integral} on $[0,T]\times D$ can be obtained through the Picard's iteration \eqref{Picard} under appropriate settings. 
If the semigroup $S_{\mathcal{L}}(t)$ has an integral kernel $G=G(t,x,y)$, which is the Green function $G$ of the linear equation $\partial_t u + \mathcal{L} u=0$, then we can write 
\[
S_{\mathcal{L}}(t) u_0(x)
= \int_D G(t,x,y) u_0(y)\, dy,
\]
\[
\int_0^t S_{\mathcal{L}}(t-\tau) F(u^{(\ell)}(\tau, x))\, d\tau
=
\int_0^t 
\int_D G(t-\tau, x,y)
F(u^{(\ell)}(\tau,y))\, dy d\tau.
\]
In addition, we suppose that the kernel $G$ has an expansion
\[
\begin{split}
G(t-\tau, x,y)
&= \sum_{m,n \in \Lambda} 
c_{m,n} \psi_m(\tau, y)\varphi_n(t,x) = \lim_{N\to \infty}
\sum_{m,n \in \Lambda_N} 
c_{m,n} \psi_m(\tau, y)\varphi_n(t,x),
\end{split}
\]
for $0\le \tau < t \le T$ and $x,y \in D$,
where $\Lambda$ is an index set that is either finite or countably infinite, and $\Lambda_N$ is a subset of $\Lambda$ with its cardinality $|\Lambda_N|=N \in \mathbb N$ and the monotonicity $\Lambda_N \subset \Lambda_{N'}$ for any $N\le N'$. 
We write the partial sum as
\[
G_N(t-\tau, x,y)
:= \sum_{m,n \in \Lambda_N} 
c_{m,n} \psi_m(\tau, y)\varphi_n(t,x)
\]
for $0\le \tau < t \le T$ and $x,y \in D$. 
We define $\Phi_N$ by 
\begin{align*}\label{def:Phi_N}
\Phi_N[u] (t,x):= & 
 \int_D G_N(t,x,y) u_0(y)\, dy
+ 
\int_0^t 
\int_D G_N(t-\tau, x,y)
F(u(\tau,y))\, dy d\tau\\
= & 
\sum_{m,n\in \Lambda_N}
c_{m,n} \langle \psi_m(0, \cdot), u_0\rangle 
\varphi_n(t,x)
+
\sum_{m,n\in \Lambda_N}
c_{m,n} \langle \psi_m, F(u)\rangle 
\varphi_n(t,x)\\
= & 
\sum_{m,n\in \Lambda_N}
c_{m,n} 
\left(
\langle \psi_m(0, \cdot), u_0\rangle
+
\langle \psi_m, F(u)\rangle 
\right)
\varphi_n(t,x),
\end{align*}
and moreover, we define an approximate Picard's iteration by
\[
\begin{dcases}
\hat{u}^{(1)} := \int_D G_N(t,x,y) u_0(y)\, dy
= \sum_{m,n\in \Lambda_N}
c_{m,n} \langle \psi_m(0, \cdot), u_0\rangle 
\varphi_n(t,x),\\
\hat{u}^{(\ell+1)} := \Phi_N [\hat{u}^{(\ell)}],
\quad \ell = 1,2,\cdots.
\end{dcases}
\]
Then, for sufficiently large $N$, we expect:
\begin{enumerate}
\item 
$\Phi_N \approx \Phi$ and $\Phi_N$ is also contractive on $X_N := B_{L^r(0,T_N ; L^s)}(M)$ for some $T_N>0$.
    \item  There exists a fixed point $\hat{u} \in X_N$ of $\Phi_N$ such that 
$\hat{u}^{(\ell)} \to \hat{u}$ as $\ell \to \infty$. 
\item $\hat{u} \approx u$ on $[0, \min\{T, T_N\}] \times D$ for any $u_0\in B_{L^\infty}(R)$.

\item Define $\Gamma : B_{L^\infty}(R) \to X_N$ by $\Gamma(u_0):= \hat{u}^{(L)}$ for $L\in \mathbb N$. Then 
$\Gamma \approx \Gamma^+$ as $N, L\to \infty$.
\end{enumerate}
The above $\Gamma$ is the prototype of our neural operator, where $N$ corresponds to the rank and $L$ to the layer depth. 
In other words, in our neural operator, the Picard's iteration step corresponds to the forward propagation in the layer direction of the neural operator, which converges to the fixed point (i.e. the solution to \eqref{eq.integral}) as the layers get deeper for sufficiently large rank $N$.

The above is only a rough idea. 
The precise definition of our neural operator is the following:

\begin{definition}[Neural operator]\label{def:neural-operator}

Let $T>0$ and $r,s \in [1,\infty]$. 
Let $\varphi:=\{ \varphi_n \}_{n}$ and $\psi:=\{ \psi_m \}_{m}$ be families of functions in
$L^{r}(0,T ; L^s(D))$ and $L^{r'}(0,T ; L^{s'}(D))$, respectively.
We define a neural operator $\Gamma : L^{\infty}(D) \to L^{r}(0,T ; L^{s}(D))$ by
\[
\Gamma : L^{\infty}(D) \to  L^{r}(0,T ; L^{s}(D)) : 
u_{0} \mapsto \hat{u}^{(L+1)}. 
\]
Here, the output function $\hat{u}^{(L+1)}$ is given by the following steps: 

\medskip

\noindent
1. {\bf (Input layer)} 
    $\hat{u}^{(1)} = (\hat{u}^{(1)}_1, \hat{u}^{(1)}_2, \ldots, \hat{u}^{(1)}_{d_1})$ is given by
    \[
\hat{u}^{(1)}(t,x):= (K^{(0)}_{N}u_0)(t,x) + b^{(0)}_{N}(t, x).
\]
Here, $K^{(0)}_{N} : L^{\infty}(D) \to L^{r}(0,T ; L^{s}(D))^{d_1}$ and $b^{(0)}_{N} \in  L^{r}(0,T ; L^{s}(D))^{d_1}$ are defined by 
\begin{align*}
& 
(K^{(0)}_{N}u_0)(t,x) :=
\sum_{m,n \in \Lambda_N}
C_{n,m}^{(0)}\langle \psi_{m}(0, \cdot), u_0 \rangle
\varphi_{n}(t, x)
\quad \text{with $C^{(0)}_{n,m} \in \mathbb{R}^{d_1 \times 1}$},  \\
&
b^{(0)}_{N}(t, x) :=
\sum_{n\in\Lambda_N}
b_{N}^{(0)}\varphi_{n}(t, x)
\quad \text{with $b^{(0)}_N \in \mathbb{R}^{d_1}$}.
\end{align*}
2. {\bf (Hidden layers)} 
    For $2\le \ell \le L$, $\hat{u}^{(\ell)} = (\hat{u}^{(\ell)}_1, \hat{u}^{(\ell)}_2, \ldots, \hat{u}^{(\ell)}_{d_\ell})$ are iteratively given by 
\[
\hat{u}^{(\ell+1)}(t,x) = \sigma \left( W^{(\ell)} \hat{u}^{(\ell)}(t,x) + (K^{(\ell)}_{N}\hat{u}^{(\ell)})(t,x) 
+ b^{(\ell)}
\right), \quad 1 \le \ell \le L-1.
\]
\medskip
3. {\bf (Output layer)} 
$\hat{u}^{(L+1)}$ is given by 
\[
\hat{u}^{(L+1)}(t,x) = W^{(L)} \hat{u}^{(L)}(t,x) + (K^{(L)}_{N}\hat{u}^{(L)})(t,x)
+b^{(L)}.
\]
Here, $\sigma : \mathbb{R} \to \mathbb{R}$ is a nonlinear activation operating element-wise, and $W^{(\ell)} \in \mathbb{R}^{d_{\ell+1}\times d_{\ell+1}}$ is a weight matrix of the $\ell$-th hidden layer, and $b^{(\ell)} \in \mathbb{R}^{d_{\ell}}$ is a bias vector, and $K^{(\ell)}_{N} :L^{r}(0,T; L^{s}(D))^{d_{\ell}} \to L^{r}(0,T; L^{s}(D))^{d_{\ell + 1}}$ 
is defined by
\begin{align*}
& 
(K^{(\ell)}_{N}u)(t,x) :=
\sum_{m,n\in\Lambda_N}
C_{n,m}^{(\ell)}\langle \psi_{m}, u \rangle
\varphi_{n}(t, x) \quad \text{with $C^{(\ell)}_{n,m} \in \mathbb{R}^{d_{\ell+1} \times d_{\ell}}$},
\end{align*}
where we use the notation 
\[
\langle \psi_{m}, u \rangle
:=
\left( \langle \psi_{m}, u_1 \rangle,\ldots, \langle \psi_{m}, u_{d_\ell} \rangle \right) \in \mathbb{R}^{d_{\ell}},
\]
for $u = (u_1,\dots,u_{d_\ell}) \in L^{r}(0,T ; L^{s}(D))^{d_{\ell}}$.
Note that $d_{L+1}=1$.

We denote by $\mathcal{NO}^{L, H, \sigma}_{N, \varphi, \psi}$ the class of neural operators defined as above, with the depth $L$, the number of neurons $H = \sum_{\ell=1}^L d_{\ell}$, the rank $N$, the activation function $\sigma$, and the families of functions  $\varphi, \psi$. 
\end{definition}

Note that $\varphi$ and $\psi$ are hyperparameters. 
The examples are the Fourier basis, wavelet basis, orthogonal polynomial, spherical harmonics, and eigenfunctions of $\mathcal L$. 
If we select $\varphi$ and $\psi$ as the Fourier
basis and wavelet basis,
the network architectures correspond to the Fourier neural operator (FNO) in \cite{li2020fourier} and the wavelet neural operator (WNO) in \cite{tripura2022wavelet}, respectively.
See also Appendix~\ref{app:C} for FNOs and WNOs in some simple cases.


\section{Quantitative approximation theorem}
\label{sec:quantitative-approximation}

In this section we prove a quantitative approximation theorem for our neural operator $\Gamma$ (Definition~\ref{def:neural-operator}). For this purpose, we assume that $\mathcal{L}$ and $F$ satisfy
Assumptions \ref{ass:L1} and \ref{ass:F}, respectively, and the parameters $r,s$ satisfy the condition \eqref{condi_rs}.
In addition, 
we assume the following:

\begin{assumption}\label{ass:L2}
Let $\Lambda$ be an index set that is either finite or countably infinite and $\Lambda_N$ a subset of $\Lambda$ with its cardinality $|\Lambda_N|=N \in \mathbb N$ and the monotonicity $\Lambda_N \subset \Lambda_{N'}$ for any $N\le N'$.
    For the Green function $G$ of the linear equation $\partial_t u + \mathcal{L} u=0$, there exist the families of functions $\varphi:=\{ \varphi_n \}_{n\in \Lambda}$ and $\psi:=\{ \psi_m \}_{m\in \Lambda}$ in
$L^{r}(0,T ; L^s(D))$ and $L^{r'}(0,T ; L^{s'}(D))$, respectively, such that $G$ has the following expansions:
\begin{equation}\label{eq.expansion1}
G(t-\tau, x, y)
= \sum_{m,n\in\Lambda}
c_{n,m} \psi_{m}(\tau, y) \varphi_{n}(t, x), \quad 0< \tau < t < T, \ x,y \in D,
\end{equation}
\begin{equation}\label{eq.expansion2}
G(t, x, y)
= \sum_{m,n\in\Lambda}
c_{n,m} \psi_{m}(0, y) \varphi_{n}(t, x), \quad 0< t < T,\ x,y \in D
\end{equation}
in the sense that 
\begin{align*}
&
C_{G}(N) := \left\| \left\| G(t-\tau, x, y) - G_N(t-\tau, x, y) \right\|_{L^{r'}_\tau(0,t ; L^{s'}_y)} \right\|_{L^{r}_t(0,T ; L^{s}_x)} \to 0,
\\
&
C'_{G}(N) := \left\| \left\| G(t, x, y) - G_N(t, x, y) \right\|_{L^{s'}_y} \right\|_{L^{r}_t(0,T ; L^{s}_x)} \to 0
\end{align*}
as $N\to \infty$, 
respectively, where 
\begin{equation}\label{eq.partialexpansion}
G_{N}(t-\tau, x, y):=\sum_{m,n \in \Lambda_N}
c_{n,m} \psi_{m}(\tau, y) \varphi_{n}(t, x),
\quad 0\le \tau < t < T, \ x,y \in D.
\end{equation}
\end{assumption}

\subsection{Main result}
Our main result is the following quantitative approximation theorem.

\begin{theorem}\label{thm:quantitative-UAT-heat}
Suppose that $\mathcal{L}$ and $F$ satisfy
Assumptions \ref{ass:L1} and \ref{ass:F}, respectively, and the parameters $r,s$ satisfy the condition \eqref{condi_rs}. 
Then for any $R>0$, there exists a time $T>0$ such that the following statement holds under Assumption \ref{ass:L2}: 
For any $\epsilon \in (0,1)$, there exist a depth $L$, the number of neurons $H$, a rank $N$, and $\Gamma \in \mathcal{NO}^{L, H, ReLU}_{N, \varphi, \psi}$ such that 
\begin{equation*}
    \sup_{u_0 \in B_{L^{\infty}}(R)} \|\Gamma^{+}(u_0) - \Gamma(u_0)\|_{L^{r}(0,T ; L^{s})} \leq \epsilon.
\end{equation*}
Moreover, $L=L(\Gamma)$ and $H=H(\Gamma)$ satisfy
\begin{equation*}
L(\Gamma) \leq C (\log (\epsilon^{-1} ))^2
    \text{~~and~~}
H(\Gamma) \leq C \epsilon^{-1} (\log (\epsilon^{-1} ))^2,
\end{equation*}
where $C>0$ is a constant depending on $\nu, M, F, D, p, d, r, s, R$, and $\mathcal{L}$, and $N=N(\Gamma)$ satisfies
\begin{equation*}
    C_{G}(N(\Gamma)) \leq \epsilon 
    \text{~~and~~}
    C'_{G}(N(\Gamma)) \leq \epsilon.
\end{equation*}
\end{theorem}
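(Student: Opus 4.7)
The plan is to build $\Gamma$ as a direct numerical realization of $L$ rounds of an approximate Picard's iteration, and to bound the total error through the telescoping decomposition
\[
\|\Gamma^+(u_0) - \Gamma(u_0)\|_{L^r(0,T;L^s)}
\le \underbrace{\|u - u^{(L)}\|}_{\text{(I)}}
+ \underbrace{\|u^{(L)} - \hat{u}^{(L)}\|}_{\text{(II)}}
+ \underbrace{\|\hat{u}^{(L)} - \Gamma(u_0)\|}_{\text{(III)}},
\]
where $u=\Gamma^+(u_0)$ is the true solution, $u^{(L)}=\Phi^{[L]}[0]$ is the exact Picard iterate from Proposition \ref{prop:picard}, and $\hat{u}^{(L)}$ is the $L$-th iterate of the truncated-kernel map $\Phi_N$ with the exact nonlinearity $F$ retained. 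The three pieces will be driven below $\epsilon/3$ by, respectively, choosing $L$ large (depth), $N$ large (rank), and approximating $F$ by a ReLU sub-network.

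For (I), Proposition \ref{prop:picard}(ii) directly yields $\|u-u^{(L)}\|\le \tfrac{\delta^L}{1-\delta}\,\mathrm{d}(u^{(1)},0)$, so it suffices to take $L\gtrsim \log(\epsilon^{-1})/\log(\delta^{-1})$. For (II), I would first derive from \eqref{eq.expansion1}--\eqref{eq.expansion2} and Assumption \ref{ass:L1} the quantitative deviation
\[
\|\Phi[v]-\Phi_N[v]\|_{L^r(L^s)}
\lesssim C_G(N)\,\|F(v)\|_{L^{r'}(L^{s'})} + C'_G(N)\,\|u_0\|_{L^\infty} \qquad (v\in X),
\]
where $X=B_{L^r(L^s)}(M)$. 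Combined with the growth bound \eqref{eq.F} (giving $\|F(v)\|\lesssim M^p$) and the standard fixed-point perturbation inequality---noting that $\Phi_N$ inherits $\delta'$-contractivity on $X$ for $N$ large---this yields $\|u^{(L)}-\hat{u}^{(L)}\|\lesssim C_G(N)+C'_G(N)$, so any $N$ achieving $C_G(N),C'_G(N)\le \epsilon$ is enough.

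The crux is term (III). Since $F:\mathbb{R}\to \mathbb{R}$ is \emph{one-dimensional}, $C^1$, and the iterates $\hat{u}^{(\ell)}$ take values in a fixed interval $[-A,A]$ (with $A$ controlled uniformly in $\ell$ and $u_0$ by the membership $\hat{u}^{(\ell)}\in X$ together with the $L^{q_1}$--$L^{q_2}$ smoothing of Assumption \ref{ass:L1}), a Yarotsky-type ReLU construction produces $\widetilde F$ with depth $L_F\lesssim \log(\epsilon^{-1})$ and width $H_F\lesssim \epsilon^{-1}\log(\epsilon^{-1})$ satisfying $\|F-\widetilde F\|_{L^\infty([-A,A])}\le \epsilon$. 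Substituting $\widetilde F$ for $F$ in every round introduces a per-round error $\lesssim \|F-\widetilde F\|_{L^\infty}$ in $L^r(L^s)$ (after absorbing semigroup smoothing), which is geometrically damped by the contraction; summing gives $\|\hat{u}^{(L)}-\Gamma(u_0)\|\lesssim \tfrac{1}{1-\delta}\|F-\widetilde F\|_{L^\infty}\le C\epsilon$.

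Finally, I would assemble these ingredients into an element of $\mathcal{NO}^{L,H,\mathrm{ReLU}}_{N,\varphi,\psi}$ as follows: one round of the approximate Picard iteration is realized by (a) a ReLU sub-network of depth $L_F$ implementing $\widetilde F$ componentwise---achieved by zeroing the $K_N^{(\ell)}$ term and using the $W^{(\ell)}$-channel together with the identity trick $x=\mathrm{ReLU}(x)-\mathrm{ReLU}(-x)$ to carry intermediate quantities---followed by (b) a single layer applying $K^{(\ell)}_N$ in the form $v\mapsto \sum_{m,n\in\Lambda_N} C_{n,m}\langle\psi_m,v\rangle\varphi_n$, with the initial-data contribution baked into the input layer as in Definition \ref{def:neural-operator}. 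Stacking $L\sim \log(\epsilon^{-1})$ such rounds yields $L(\Gamma)\lesssim (\log\epsilon^{-1})^2$ and $H(\Gamma)\lesssim \epsilon^{-1}(\log\epsilon^{-1})^2$, matching the theorem. The main obstacle will be ensuring that both $\hat{u}^{(\ell)}$ and its ReLU-perturbed counterpart remain inside $X$ \emph{uniformly} in $u_0\in B_{L^\infty}(R)$ throughout the iteration, so that the contraction constant does not deteriorate, the comparison $\|F-\widetilde F\|_{L^\infty([-A,A])}$ stays valid on a \emph{fixed} interval, and the polynomial Lipschitz factor in \eqref{eq.F} does not blow up; this is secured by fixing $R,M,T$ through Proposition \ref{prop:LWP} first and then tightening $N$ and $\widetilde F$ so that the perturbed map remains a $\delta'$-contraction on $X$ with $\delta'<1$.
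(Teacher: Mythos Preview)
Your overall strategy—Picard error plus kernel-truncation error plus ReLU error, with the contraction damping the accumulated per-round defects—is exactly the paper's. The three-way split versus the paper's two-way split (exact Picard vs.\ everything else, then telescoped) is cosmetic. But there is a real gap in how you propose to close the iteration, and it is precisely the point you flag as ``the main obstacle'' without actually resolving.

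The issue is the pairing in (II) and the pointwise control in (III). By H\"older, the inequality you want is
\[
\|\Phi[v]-\Phi_N[v]\|_{L^r(0,T;L^s)} \le C_G(N)\,\|F(v)\|_{L^{r}(0,T;L^{s})} + C'_G(N)\,|D|^{1/s}\|u_0\|_{L^\infty}
\]
(the norm on $F(v)$ is $L^r(L^s)$, dual to the $L^{r'}(L^{s'})$ in $C_G(N)$; your display has the exponents flipped). For $v\in X=B_{L^r(L^s)}(M)$ the growth condition \eqref{eq.F} only gives $F(v)\in L^{r/p}(L^{s/p})$, \emph{not} $L^r(L^s)$, so the bound ``$\|F(v)\|\lesssim M^p$'' is unavailable in the norm you need. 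Likewise, the claim that the $\hat u^{(\ell)}$ take values in a fixed interval $[-A,A]$ cannot come from ``membership in $X$ together with the smoothing of Assumption~\ref{ass:L1}'': the iterates are produced by the truncated kernel $G_N$, which has no $L^{q_1}\!\to\!L^{q_2}$ smoothing. And tightening $N$ so that $\Phi_N$ becomes a $\delta'$-contraction on $X$ does not help: contractivity on $X$ still yields only $L^r(L^s)$ bounds, not the pointwise bound required for both the H\"older pairing above and the validity of $\|F-\widetilde F\|_{L^\infty([-A,A])}$.

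The paper fixes this by introducing a second, auxiliary ball $B_{L^\infty(0,T;L^\infty)}(M')$ with $T,M,M'$ chosen so that $B_{L^\infty}(M')\subset X$. From Assumption~\ref{ass:L1} one has the crude $L^1$-bound $\|G(t,\cdot,\cdot)\|_{L^1_y}\le C_{\mathcal L}$ (no smoothing, just $L^\infty\to L^\infty$), and along a subsequence $N_k$ the truncated kernels $G_{N_k}$ inherit the same $L^1$ bound up to a factor $2$. This makes $\Phi_{N_k}$ and its ReLU variant self-maps of $B_{L^\infty}(M')$ directly, giving (a) the pointwise bound $|v|\le M'$ so that $\|F(v)\|_{L^r(L^s)}\lesssim C_F {M'}^p$ and $\|F-F_{net}\|_{L^\infty(-M',M')}$ is meaningful on a fixed interval, and (b) membership in $X$ so that the $\delta$-contraction of the \emph{exact} $\Phi$ can be invoked in the telescoping sum. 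Once you replace ``contraction of $\Phi_N$ on $X$'' by ``self-mapping of $\Phi_{N_k}$ and $\Phi_{N_k,net}$ on the $L^\infty$-ball, plus contraction of $\Phi$ on $X$'', your argument goes through verbatim and matches the paper's.
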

Let us give a remark on model complexity of our neural operator $\Gamma$ in Theorem \ref{thm:quantitative-UAT-heat}.
The model complexity depends on the depth $L(\Gamma)$, the number of neurons $H(\Gamma)$, and the rank $N$. As to $L(\Gamma)$ and $H(\Gamma)$, 
we observe that they increase only squared logarithmically at most as $\epsilon \to 0$. 
This is reasonable since each forward pass of the neural operator's layers corresponds to a step in Picard's iteration, and the convergence rate of Picard's iteration decays exponentially (see Proposition \ref{prop:picard} (ii)).
As to the rank $N$, Theorem \ref{thm:quantitative-UAT-heat} does not mention anything regarding the rates of the errors $C_{G}(N)$ and $C'_{G}(N)$, but these error rates are also necessary for quantitatively estimating the model complexity of $\Gamma$. These error rates strongly depend on the choice of $\varphi$ and $\psi$, and the research results are enormous (see e.g. \cite{cohen2003numerical,devore1988interpolation,devore1993,Dung}). 
For example, in wavelet bases, the error rate is $O(N^{-s/d})$, where $s>0$ is the regularity of target function and $d$ is the spatial dimension (see \cite[Theorem 4.3.2]{cohen2003numerical}).
Therefore, if the Green function $G$ has a certain regularity, we may obtain a quantitative estimate for the rank $N$ such as the polynomial rate of $C_{G}(N)$ and $C'_{G}(N)$, and as a result, it is shown that the model complexity of $\Gamma$ does not grow exponentially.
This prevents the exponential growth in model complexity, which often arises in general operator approximation.

\begin{proof}[Sketch of proof] (See Appendix~\ref{app:D} for the exact proof). 
The proof is based on the following scheme: 
\begin{align}
u_0
&
\xrightarrow[]{\  S_{\mathcal{L}}\ \,}
u^{(1)}
\xrightarrow[]{\ \ \, \Phi\ \ \ }
u^{(2)}
\xrightarrow[]{\ \ \, \Phi\ \ \ }
\cdots
\xrightarrow[]{\ \ \, \Phi\ \ \ }
u^{(J)} (\cdots \xrightarrow[]{J \to \infty} u) \tag{Picard's iteration}
\label{eq:Picard-iteration}
\\
u_0
&
\xrightarrow[]{S_{\mathcal{L},N}}
\hat{u}^{(1)}
\xrightarrow[]{\Phi_{N, net}}
\hat{u}^{(2)}
\xrightarrow[]{\Phi_{N, net}}
\cdots
\xrightarrow[]{\Phi_{N, net}}
\hat{u}^{(J)}  \tag{Neural operator iteration}
\label{eq:NO-iteration}
\end{align}
where $S_{\mathcal{L}}$, $S_{\mathcal{L}, N}$, $\Phi$, and $\Phi_{N, net}$ are defined as
\begin{align*}
&
S_{\mathcal{L}}[u_0](t, x):= 
\int_D G(t,x,y) u_0(y)\, dy,
\quad 
S_{\mathcal{L},N}[u_0](t, x):= 
\int_D G_N(t,x,y) u_0(y)\, dy,
\\
&
\Phi[u](t, x):= S_{\mathcal{L}}[u_0](t, x) + \int_0^t \int_D G(t,x,y) F(u(\tau, y))\, dy d \tau,
\\
&
\Phi_{N, net}[u](t, x):= S_{\mathcal{L},N}[u_0](t, x) + \int_0^t \int_D G_N(t,x,y) F_{net}(u(\tau, y))\, dy d \tau.
\end{align*}
First, by the truncation of Picard's iteration, the solution $u=\Gamma^{+}(u_0)$ to \eqref{P'} is approximated by
$$
u \approx u^{(J)} = \Phi^{[J-1]} \circ S_{\mathcal{L}}[u_0],
$$ 
for sufficiently large $J$. 
Next, by approximating the Green function $G$ and the nonlinearity $F$ with a truncated expansion $G_N$ and a neural network $F_{net}$, respectively, the contraction mapping $\Phi$ is approximated by $\Phi_{N, net}$ for sufficiently large $N$ (Lemmas~\ref{lem:map-N} and ~\ref{lem:map-N-net}), leading to 
$$
u^{(J)} \approx \hat{u}^{(J)} = \Phi^{[J-1]}_{N, net} \circ S_{\mathcal{L},N}[u_0],
$$ 
(Lemma~\ref{lem:step-2}).  
Finally, we represent $\hat{u}^{(J)}$ as a neural operator $\Gamma$ (Lemma~\ref{lem:representation-NO}).  

The key idea is to mimic the Picard's iteration. 
The convergence rate of Picard's iteration is given in Proposition \ref{prop:picard} (ii), which reads $J\approx \log (\epsilon^{-1})$.
By applying the one dimensional case of \cite[Theorem 4.1]{guhring2020error}, we approximate $F \approx F_{net}$ with the rates $L(F_{net}) \lesssim (\log (\epsilon^{-1} ))$ and $H(F_{net}) \lesssim \epsilon^{-1} (\log (\epsilon^{-1} ))$. Note that the curse of dimensionality that occurs in conventional  neural network approximations does not appear here. Consequently, 
the rates of the depth $L(\Gamma)$ and the number of neurons $H(\Gamma)$ are evaluated as
\begin{align*}
&
L(\Gamma) \lesssim L(F_{net}) \cdot J \approx \log \left(\epsilon^{- 1} \right) \cdot \log (\epsilon^{-1}) = (\log (\epsilon^{-1}))^2,
\\
&
H(\Gamma) \lesssim H(F_{net}) \cdot J \lesssim \epsilon^{-1}  \left(\log(\epsilon^{-1}) \right) \cdot \left(\log(\epsilon^{-1}) \right) \lesssim \epsilon^{-1}  \left(\log(\epsilon^{-1}) \right)^2,
\end{align*}
where the first inequality is given by the exact construction of the neural operator $\Gamma$ discussed in the proof of Lemma~\ref{lem:representation-NO} as
\begin{align}
\label{eq:exact-rep-no}
\Gamma(u_0) 
=
\tilde{W}' 
\circ
\left[
\underbrace{(\tilde{W} + \tilde{K}_{N}) \circ \tilde{F}_{net} \circ \cdots \circ (\tilde{W} + \tilde{K}_{N}) \circ \tilde{F}_{net}}_{J-1}
\right]
\circ
(\tilde{K}^{(0)}_{N} + \tilde{b}^{(0)}_{N})(u_0),
\end{align}
where $\tilde{W}'$ and $\tilde{W}$ are some weight matrices, $\tilde{K}_{N}$ and $\tilde{K}^{(0)}_{N}$ are some non-local operators, $\tilde{b}^{(0)}_{N}$ is some bias function, and $\tilde{F}_{net}$ is some neural network. 
\end{proof}
\begin{remark}
\label{rem:weight-tied}
It is important to note that the hidden layers of the neural operator $\Gamma$ constructed in \eqref{eq:exact-rep-no} are structured to repeatedly apply the same operator, $(\tilde{W} + \tilde{K}_{N}) \circ \tilde{F}_{net}$, which significantly reduces memory consumption. 
Our constructed neural operator is closely related to the deep equilibrium models \cite{bai2019deep}, which utilize weight-tied architectures to find fixed points.
More recently, inspired by deep equilibrium models, weight-tied neural operators have been experimentally investigated in the context of solving steady-state PDEs \cite{marwah2023deep}. 
Based on the above findings, our results also give weight-tied neural operators for (unsteady-state) parabolic PDEs and theoretically guarantee their approximation ability.
\end{remark}

\begin{remark}
\label{rem:positivity}
One of the next issues in approximation theory is to verify in what sense the obtained approximate solution is a good approximation.
In particular, it is important to discuss that an approximator $\Gamma$ obtained in Theorem~\ref{thm:quantitative-UAT-heat} preserves desired properties in parabolic PDEs, such as the maximum principle and the comparison principle.
In the special case of the nonlinearity, we can show that approximator $\Gamma$ preserves the positivity. For more details, see Appendix~\ref{app:F}.
\end{remark}

\subsection{Approximation of long time solutions}\label{sub:5.1}

\noindent{\bf Extension of a short time solution to a long time solution.}
By Proposition \ref{prop:picard} (i), for any initial data $u_0 \in B_{L^\infty}(R)$, there exists a short time solution $u= \Gamma^+ (u_0)$ to \eqref{eq.integral} on $[0,T] \times D$. 
Then, if $u(T)$ at $t=T$ belongs to $B_{L^\infty}(R)$, the same solution operator $\Gamma^+$ can be applied to 
$u(T)$ as an initial data, and hence, the solution can be extended to that on $[0,2T] \times D$.
Thus, as long as the supremum value of the solution is less than or equal to $R$, the solution can be extended with respect to $t$ by applying $\Gamma^+$ as many times as possible. 
We denote by $u_{\max}$ the extended maximal solution with respect to $t$ and by $\kappa_{\max}=\kappa_{\max}(u_0)\in \mathbb N \cup \{\infty\}$ the maximal number of extensions by $\Gamma^+$.
Then we can write $u_{\max}$ as 
    \[
    u_{\max}(t) = {\Gamma^+}^{[\kappa]}(u_0)(t)
    \]
for $(\kappa-1) T < t \le \kappa T$ and $\kappa = 1,\ldots, \kappa_{\max}$.

\noindent{\bf Approximation of the long time solution $u_{\max}$ by our neural operator.}
Similarly, for our neural operator $\Gamma$ given in Theorem \ref{thm:quantitative-UAT-heat}, the output function of $\Gamma$ can be extended with respect to time $t$ by applying  $\Gamma$ as many times as possible.
We denote by $\hat{u}_{\max}$ the extended maximal output function of $\Gamma$ with respect to $t$ and by
$\hat{\kappa}_{\max}=\hat{\kappa}_{\max}(u_0) \in \mathbb N \cup \{\infty\}$ the maximal number of extensions by $\Gamma$. 
Then we can also write $\hat{u}_{\max}$ as
\[
\hat{u}_{\max}(t) := \Gamma^{[\kappa]}(u_0) (t)
\]
for $(\kappa-1) T < t \le \kappa T$ and $\kappa = 1,\ldots, \hat{\kappa}_{\max}$.
Then the error between $u_{\max}$ and $\hat{u}_{\max}$ can be controlled by the error $\epsilon$ in Theorem \ref{thm:quantitative-UAT-heat} and another error $\tilde{\epsilon}$ given by 
\[
\tilde{\epsilon}:= \sum_{\kappa=1}^{\kappa^*-1} \|{\Gamma^+}^{[\kappa]}(u_0)(\kappa T) -\Gamma^{[\kappa]}(u_0) (\kappa T)\|_{L^\infty}.
\]
More precisely, we have the following result on the error between $u_{\max}$ and $\hat{u}_{\max}$.

\begin{corollary}\label{cor:longtime}
Let $\epsilon\in (0,1)$ and $R>0$. Suppose the same assumptions as in Theorem \ref{thm:quantitative-UAT-heat} and $\kappa^* := \min \{\kappa_{\max}, \hat{\kappa}_{\max}\}$ is a finite number.
Then there exists a constant $C>0$ such that 
\[
 \| u_{\max} - \hat{u}_{\max}\|_{L^{r}(0,\kappa^* T ; L^{s})} \leq C (\epsilon + \tilde{\epsilon})
\]
for any $u_0 \in B_{L^\infty}(R)$, where $T$ and $\Gamma$ are the same as in Theorem \ref{thm:quantitative-UAT-heat}.
\end{corollary}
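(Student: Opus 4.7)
The plan is to decompose $[0, \kappa^* T]$ into the short-time pieces $I_\kappa := [(\kappa-1)T, \kappa T]$ for $\kappa = 1,\ldots,\kappa^*$ and control the error on each piece using Theorem~\ref{thm:quantitative-UAT-heat} together with the continuous dependence estimate from Proposition~\ref{prop:LWP}. By construction of the extensions, on $I_\kappa$ we have
\[
u_{\max}(t) = \Gamma^+(v_{\kappa-1})(t-(\kappa-1)T), \qquad \hat{u}_{\max}(t) = \Gamma(\hat{v}_{\kappa-1})(t-(\kappa-1)T),
\]
with $v_{j} := {\Gamma^+}^{[j]}(u_0)(jT)$ and $\hat{v}_{j} := \Gamma^{[j]}(u_0)(jT)$ (so $v_0 = \hat{v}_0 = u_0$). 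By the definition of $\kappa^* = \min\{\kappa_{\max}, \hat{\kappa}_{\max}\}$, both $v_{\kappa-1}$ and $\hat{v}_{\kappa-1}$ lie in $B_{L^\infty}(R)$ for every $\kappa \leq \kappa^*$, so that $\Gamma^+$ and $\Gamma$ may be applied with these as initial data.

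Next, inserting $\Gamma^+(\hat{v}_{\kappa-1})$ and using the triangle inequality yields
\[
\|u_{\max} - \hat{u}_{\max}\|_{L^r(I_\kappa;L^s)} \leq \|\Gamma^+(v_{\kappa-1}) - \Gamma^+(\hat{v}_{\kappa-1})\|_{L^r(0,T;L^s)} + \|\Gamma^+(\hat{v}_{\kappa-1}) - \Gamma(\hat{v}_{\kappa-1})\|_{L^r(0,T;L^s)}.
\]
Proposition~\ref{prop:LWP} bounds the first term by $C\|v_{\kappa-1} - \hat{v}_{\kappa-1}\|_{L^\infty}$, and Theorem~\ref{thm:quantitative-UAT-heat} bounds the second by $\epsilon$ (uniformly on $B_{L^\infty}(R)$). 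Summing over $\kappa = 1,\ldots,\kappa^*$, using the subadditivity $\|f\|_{L^r(0,\kappa^* T;L^s)} \leq \sum_\kappa \|f\|_{L^r(I_\kappa;L^s)}$, and noting that the $\kappa = 1$ continuous-dependence contribution vanishes because $v_0 = \hat{v}_0 = u_0$, the remaining continuous-dependence sum equals $C \sum_{j=1}^{\kappa^*-1} \|v_j - \hat{v}_j\|_{L^\infty} = C \tilde{\epsilon}$. Hence the total error is at most $C \tilde{\epsilon} + \kappa^* \epsilon$, and absorbing the finite $\kappa^*$ into the constant gives the claim.

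The main subtlety will be ensuring uniformity of the constants in both Proposition~\ref{prop:LWP} and Theorem~\ref{thm:quantitative-UAT-heat} across the subintervals: the existence time $T$ and the Lipschitz constant $C$ in the continuous dependence estimate must be chosen uniformly over $B_{L^\infty}(R)$ so that the same $T$ and $C$ apply on each $I_\kappa$. This is already implicit in how $T$ is constructed in the fixed-point argument (depending only on $R$, $M$, $\delta$, not on $u_0$ itself) and matches the statement of Theorem~\ref{thm:quantitative-UAT-heat}, which provides a single $T$ determined by $R$. A minor technical point is that the traces $u_{\max}(\kappa T)$ and $\hat{u}_{\max}(\kappa T)$ must be well-defined as $L^\infty$ functions to serve as initial data for the next stage, which is implicit in the very existence of the extensions $u_{\max}$ and $\hat{u}_{\max}$.
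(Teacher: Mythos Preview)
Your proof is correct and follows essentially the same approach as the paper's proof in Appendix~\ref{app:E}: decompose $[0,\kappa^*T]$ into the subintervals $I_\kappa$, insert $\Gamma^+(\hat v_{\kappa-1})$ as an intermediate term, apply the continuous dependence estimate of Proposition~\ref{prop:LWP} to the first difference and Theorem~\ref{thm:quantitative-UAT-heat} to the second, then sum and absorb $\kappa^*$ into the constant. Your discussion of the uniformity of $T$ and the Lipschitz constant over $B_{L^\infty}(R)$ makes explicit what the paper leaves implicit.
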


The proof is given in Appendix \ref{app:E}. 



\section{Discussion}

A limitation of our method is that it applies only to parabolic PDEs that can be solved using the Banach’s fixed point theory. However, our approach is not restricted to parabolic PDEs and can be adapted to a broader class of nonlinear PDEs, because the Banach’s fixed point theory is widely applicable to many nonlinear PDEs. In addition, our proof based on Picard's iteration is constructive, and this constructive approach may offer valuable insights for future experimental studies, particularly when incorporating constraints specific to the architecture of PDE tasks (for instance, weight-tied architecture discussed in Remark~\ref{rem:weight-tied}). 
In future work, we will develop a theoretical framework applicable to a broader class of nonlinear PDEs, based on the ideas presented in this paper.

In numerical analysis, it is crucial that the solutions of discrete equations retain the properties of the original continuous solutions. Similarly, neural operators must also preserve these properties, especially when used as surrogate models of simulators in practical applications. While we have shown positivity preserving in a specific case of nonlinearity (Remark \ref{rem:positivity}), future work may extend this result to more general nonlinear settings. Additionally, other important properties, such as the comparison principle and the smoothing effect, which are essential in parabolic equations, will be the focus of future studies.

\subsubsection*{Acknowledgments}
T. Furuya is supported by JSPS KAKENHI Grant Number JP24K16949 and JST ASPIRE JPMJAP2329. 
K. Taniguchi is supported by JSPS KAKENHI Grant Number 24K21316. 

\bibliographystyle{plain}
\bibliography{arXiv20241002}

\newpage

\appendix
\section*{Appendix}

\section{Examples of \texorpdfstring{$\mathcal{L}$}~~(Assumption \ref{ass:L1})}\label{app:A}

This appendix provides some typical examples of $\mathcal L$ satisfying Assumption \ref{ass:L1}. 

\subsection{Dirichlet Laplacian}\label{sub:A.1}
Let $D$ be a bounded domain in $\mathbb R^d$ with $d\in \mathbb N$.
Let $\mathcal L = -\Delta_{\rm Dir}$ denote the Dirichlet Laplacian on $L^2(D)$. 
Then the semigroup $\{S_{-\Delta_{\rm Dir}}(t)\}_{t\ge 0}$ corresponds to the solution operator of the initial boundary value problem of the linear heat equation
\[
\begin{cases}
\partial_t u -\Delta u = 0\quad &\text{in }(0,T)\times D,\\
u(0)=u_0\quad &\text{in } D,\\
u = 0\quad &\text{on } (0,T) \times \partial D,
\end{cases}
\]
where $\partial D$ denotes the boundary of $D$. The integral kernel $G_{\rm Dir}(t,x,y)$ of $S_{-\Delta_{\rm Dir}}(t)$ has the Gaussian upper bound
\[
0\le G_{\rm Dir}(t,x,y) \le (4\pi t)^{-\frac{d}{2}} \exp\left(-\frac{|x-y|^2}{4t}\right),
\]
for any $t>0$ and almost everywhere $x,y\in D$ (see e.g. \cite[Proposition~3.1]{iwabuchi2018boundedness}). 
This bound yields 
that $\mathcal L= -\Delta_{\rm Dir}$ satisfies Assumption \ref{ass:L1} with $\nu=d/2$. 

\subsection{Neumann or Robin Laplacian}\label{sub:A.2}

Let $D$ be a bounded Lipschitz domain in $\mathbb R^d$ with $d\ge 1$. We consider the Robin Laplace $-\Delta_\gamma$ on $L^2(D)$ associated with a quadratic form
\[
q_\gamma(f,g) = \int_\Omega \nabla f \cdot \nabla g\,dx
+ \int_{\partial \Omega} \gamma f g\,dS,
\]
for any $f,g \in H^1(D)$, where $\gamma$ is a function $\partial \Omega \to \mathbb R$.
Note that $\gamma$ means a Robin boundary condition, and particularly, the case of $\gamma=0$ means the zero Neumann boundary condition. Hence, $-\Delta_0$ is the Neumann 
Laplacian $-\Delta_{\rm Neu}$ on $L^2(D)$.
Assume that $\gamma \in L^\infty(\partial D)$ and $\gamma\ge0$.
We denote by $G_\gamma(t,x,y)$ and $G_{\rm Neu}(t,x,y)$ the integral kernels of semigroups generated by $-\Delta_\gamma$ and $-\Delta_{\rm Neu}$, respectively. 
Then $G_\gamma(t,x,y)$ and $G_{\rm Neu}(t,x,y)$ have the Gaussian upper bounds:
\[
G_{\rm Dir}(t,x,y) \le G_\gamma(t,x,y)\le G_{\rm Neu}(t,x,y)
\le C t^{-\frac{d}{2}} \exp\left(-\frac{|x-y|^2}{C't}\right),
\]
for any $0<t\le 1$ and almost everywhere $x,y\in D$ and for some $C,C'>0$. 
The first and second inequalities follow from domination of semigroups (see e.g. \cite[Theorem~2.24]{ouhabaz2005}),  
and the proof of the last inequality can be found in \cite{choulli2015observations}.
These bounds yield 
that $-\Delta_{\rm Neu}$ and $-\Delta_\gamma$ satisfy Assumption~\ref{ass:L1} with $\nu=d/2$.

\subsection{Schr\"odinger operator with a singular potential}\label{sub:A.3}

Let $D$ be a bounded domain in $\mathbb R^d$ with $d\ge 1$. We consider 
the Schr\"odinger operator $\mathcal L= -\Delta + V$ with the zero Dirichlet boundary condition on $\partial D$, where 
$V=V(x)$ is a real-valued measurable function on $D$ such that 
\[
V=V_+ - V_-,\quad V_{\pm} \ge0,\quad V_+ \in L^1_{\mathrm{loc}}(D)\quad \text{and}\quad
V_- \in K_d(D).
\]
We say that $V_-$ belongs to the Kato class $K_{d}(D)$ if
\begin{align}\notag
\left\{
\begin{aligned}
	&\lim_{r \rightarrow 0} \sup_{x \in D} \int_{D \cap \{|x-y|<r\}}
	   \frac{V_-(y)}{|x-y|^{d-2}} \,dy = 0 &\text{for }d\ge 3, \\
	&\lim_{r \rightarrow 0} \sup_{x \in D} \int_{D \cap \{|x-y|<r\}}
	   \log (|x-y|^{-1})V_-(y) \,dy = 0 &\text{for }d=2, \\
	&\sup_{x \in D}\int_{D \cap \{|x-y|<1\}} V_-(y) \,dy <\infty &\text{for }d=1\,
	\end{aligned}\right.,
\end{align}
(see \cite[Section A.2]{simon1982schrodinger}).
It is readily seen that 
the positive part $V_+$ is almost unrestricted, and the negative part $V_-$ allows for the singularity such as 
 $V_-(x) = 1/|x|^\alpha$ with $0\le \alpha <2$ if $d\ge2$ and $0\le \alpha <1$ if $d=1$.
Then it is known that 
the integral kernel $G_V$ of semigroup generated by
the Schr\"odinger operator $\mathcal L= -\Delta + V$ satisfies
the Gaussian upper bound
\begin{equation}\label{GUB1}
|G_V(t,x,y)|
\le C t^{-\frac{d}{2}} \exp\left(-\frac{|x-y|^2}{C't}\right),
\end{equation}
for any $0<t\le 1$ and almost everywhere $x,y\in D$ and for some $C,C'>0$ (see e.g. \cite[Proposition 3.1]{iwabuchi2018boundedness}).
Therefore, $\mathcal L= -\Delta + V$ satisfies 
Assumption \ref{ass:L1} with $\nu=d/2$.

\subsection{Uniformlly elliptic operator}\label{sub:A.4}
We consider the following linear parabolic PDEs
\begin{equation}\label{P:ell}
\begin{cases}
\partial_t u + \mathcal{L}u = 0 \quad &\text{in }(0,T)\times D,\\
u(0)=u_0\quad &\text{in } D,
\end{cases}
\end{equation}
where 
\[
\mathcal L := \sum_{k,j=1}^d \partial_{x_k}
\left(a_{kj}\partial_{x_j}\right) + \sum_{k=1}^d b_k \partial_{x_k} + c_0, 
\]
with real-valued functions $a_{k,j}, b_k, c_0$ on $D$ and 
with domain $\mathcal D(\mathcal L)$ that is a closed subspace of $H^1(D)$. 
Note that the choice of domain $\mathcal D(\mathcal L)$ corresponds to the choice of boundary condition.
For example, $\mathcal D(\mathcal L) = H^1_0(D)$ and $H^1(D)$ mean the zero Dirichlet boundary condition and the zero Neumann boundary condition, respectively.
We assume that $\mathcal L$ satisfies the uniformly elliptic condition:
\begin{itemize}
    \item $a_{k,j}, b_k, c_0 \in L^\infty(D)\text{ for all } 1\le k,j\le d$;

     \item There exists a constant $C>0$ such that
     \[
     \sum_{k,j=1}^d a_{k,j}(x)\xi_k \xi_j \ge C|\xi|^2,
     \]
     for any $\xi \in \mathbb R ^d$ and almost everywhere $x \in D$.
\end{itemize}
In addition, we also assume the following on $\mathcal D(\mathcal L)$:
\begin{itemize}
    \item $V$ is continuous embedded into $L^{2^*}(D)$, where $2^* = \frac{2d}{d-2}$ if $d\ge3$, $2^*$ is any number in $(2,\infty)$ if $d=2$, and $2^*=\infty$ if $d=1$;

    \item If $u \in \mathcal D(\mathcal L)$, then $\max\{0, u\}, \min\{1, |u|\} \operatorname{sign} u \in \mathcal D(\mathcal L)$;

    \item If $u \in \mathcal D(\mathcal L)$, then $e^\eta u \in \mathcal D(\mathcal L)$ for any real-valued function $\eta \in C^\infty(D)$ such that $\eta$ and $|\nabla \eta|$ are bounded on $D$.
\end{itemize}
Then it is known that the integral kernel of semigroup generated by $\mathcal L$ satisfies 
the same Gaussian upper bound as \eqref{GUB1} for any $0<t\le 1$ and almost everywhere $x,y\in D$ (see \cite[Theorem 6.10]{ouhabaz2005}).
Therefore, the uniformly elliptic operator $\mathcal L$ satisfies 
Assumption \ref{ass:L1} with $\nu=d/2$.




\subsection{Other examples}\label{sub:A.5}

Various other operators, such as the fractional Laplacian $(-\Delta)^{\alpha/2}$ ($\nu = d/\alpha$), the higher-order elliptic operator ($\nu=d/m$ with the highest order $m$) and the Schr\"odinger operator with a Dirac delta potential ($\nu = 1/2$ with $d=1$), are possible, but these will not be mentioned here. For more information, see e.g. the books \cite{davies1989heat,ouhabaz2005} and the papers \cite{BDN2020,davies1995heat,ITW2024,iwabuchi2017semigroup}
and references therein. 

\section{Proof of Proposition \ref{prop:LWP}}\label{app:B}

Let $D$ be a bounded domain in $\mathbb R^d$ with $d\in \mathbb N$.
We consider the Cauchy problem of nonlinear parabolic PDEs
\begin{equation}\tag{P}\label{P}
\begin{cases}
\partial_t u + \mathcal{L}u = F(u)\quad &\text{in }(0,T)\times D,\\
u(0)=u_0\quad &\text{in } D.
\end{cases}
\end{equation}
This problem can be formally rewritten as the integral form 
\begin{equation}\label{P'}\tag{P'}
u(t) = S_{\mathcal{L}}(t) u_0 + \int_0^t S_{\mathcal{L}}(t-\tau) F(u(\tau))\, d\tau.
\end{equation}
Throughout this appendix, we always assume that $\mathcal{L}$ and $F$ satisfy Assumptions \ref{ass:L1} and \ref{ass:F}, respectively.
We study local well-posedness (LWP) for the integral equation \eqref{P'}.
The LWP for \eqref{P'} means the existence of local in time solution, uniqueness of the solution, and continuous dependence on initial data for \eqref{P'}, and 
it has long been extensively studied (cf. \cite{brezis1996nonlinear,weissler1979semilinear,weissler1980local}). 
In this appendix, we provide the LWP results
that can be used as the basis of our study of neural operator and its approximation theorem.
We use the space $L^{r}(0,T ; L^s(D))$ as a solution space. 
For convenience, we set
\begin{equation}\label{def:al-be}
\alpha=\alpha(\nu,p,s):= -\frac{\nu(p-1)}{s},
\quad 
\beta=\beta(p,r) := \frac{r-p+1}{r}.
\end{equation}
We have the following result.


\begin{proposition}\label{prop:fixed-point}
Let  $d\in \mathbb N$ and $p \in (1,\infty)$ and $\nu>0$. Assume that $r,s$ satisfy
\begin{equation}\label{condi_rs_1}
    r,s \in [p,\infty]\quad \text{and}\quad 
    \frac{\nu}{s}+\frac{1}{r} < \frac{1}{p-1}.
\end{equation}
Let $M>0$, $0<T\le 1$ and $u_0 \in L^\infty(D)$ be such that 
\begin{equation}\label{condi-subcri}
\rho
+ \delta M \le M,
\end{equation}
where $\rho$ and $\delta$ are defined by
$$\rho = \rho(T,\|u_0\|_{L^\infty}):= C_{\mathcal L} |D|^\frac{1}{s}  T^{\frac{1}{r}} 
\|u_0\|_{L^\infty},$$ 
$$\delta = \delta(T,M):= 2(\alpha\beta^{-1} +1)^{-\beta} C_{\mathcal{L}} C_F
T^{\alpha +\beta}
M^{p-1},$$
respectively. 
Define the map $\Phi=\Phi_{u_0}$ by 
\[
\Phi[u] (t):= S_{\mathcal{L}}(t) u_0 + \int_0^t S_{\mathcal{L}}(t-\tau) F(u(\tau))\, d\tau,
\]
for $t\in [0,T]$ and the complete metric space $X=X(T,M)$ by 
\[
X := B_{L^r(0,T; L^s)} (M) = \{ u \in L^r(0,T; L^s(D)) : \|u\|_{L^r(0,T; L^s)} \le M\},
\]
equipped with the metric 
\[
\mathrm{d} (u,v) := \|u-v\|_{L^r(0,T; L^s)} .
\]
Then $\Phi$ is a contraction mapping from $X$ to itself. 
Consequently, there exists a unique function $u \in X$ such that $\Phi[u]=u$.
\end{proposition}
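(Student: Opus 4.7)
The plan is to apply Banach's fixed point theorem to $\Phi$ on the complete metric space $(X,\mathrm{d})$, which requires verifying that $\Phi(X)\subset X$ and that $\mathrm{d}(\Phi[u],\Phi[v])\le\delta\,\mathrm{d}(u,v)$ for all $u,v\in X$ with $\delta<1$. Both properties reduce to the same pair of estimates---one for the linear semigroup term $S_{\mathcal{L}}(t)u_{0}$ and one for the Duhamel integral $\int_{0}^{t}S_{\mathcal{L}}(t-\tau)F(u(\tau))\,d\tau$---obtained by combining Assumption~\ref{ass:L1}, Assumption~\ref{ass:F}, and Young's inequality for one-sided convolutions in time.

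For the linear term, I would take $q_{1}=q_{2}=s$ in \eqref{heat-est} to get $\|S_{\mathcal{L}}(t)\|_{L^{s}\to L^{s}}\le C_{\mathcal{L}}$, and since $\|u_{0}\|_{L^{s}}\le|D|^{1/s}\|u_{0}\|_{L^{\infty}}$ on the bounded domain $D$, integration in $t\in(0,T)$ yields $\|S_{\mathcal{L}}(\cdot)u_{0}\|_{L^{r}(0,T;L^{s})}\le\rho$. For the Duhamel term, Assumption~\ref{ass:L1} with $q_{1}=s/p$ and $q_{2}=s$ produces the singular kernel $(t-\tau)^{\alpha}$ with $\alpha=-\nu(p-1)/s$, while $F(0)=0$ together with \eqref{eq.F} gives $\|F(u(\tau))\|_{L^{s/p}}\le C_{F}\|u(\tau)\|_{L^{s}}^{p}$ (the assumption $s\ge p$ ensures $L^{s/p}$ is a Banach space). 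Combining these produces
\[
\bigl\|\Phi[u](t)-S_{\mathcal{L}}(t)u_{0}\bigr\|_{L^{s}}\le C_{\mathcal{L}}C_{F}\int_{0}^{t}(t-\tau)^{\alpha}\|u(\tau)\|_{L^{s}}^{p}\,d\tau,
\]
and Young's inequality in $t$ with the exponent relation $1+1/r=1/r_{1}+p/r$, i.e.\ $r_{1}=1/\beta$, will control this by $C_{\mathcal{L}}C_{F}\,\|(\cdot)^{\alpha}\|_{L^{1/\beta}(0,T)}\,M^{p}$. A direct computation of $\|(\cdot)^{\alpha}\|_{L^{1/\beta}(0,T)}=(\alpha\beta^{-1}+1)^{-\beta}T^{\alpha+\beta}$ reproduces exactly the constant appearing in $\delta$, so the Duhamel bound is $(\delta/2)M$ and $\|\Phi[u]\|_{L^{r}(0,T;L^{s})}\le\rho+(\delta/2)M\le\rho+\delta M\le M$ by \eqref{condi-subcri}, giving $\Phi(X)\subset X$.

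For the contraction estimate, I would combine \eqref{eq.F} with the pointwise bound $\max(|u|,|v|)^{p-1}\le|u|^{p-1}+|v|^{p-1}$ and H\"older's inequality to obtain
\[
\|F(u(\tau))-F(v(\tau))\|_{L^{s/p}}\le C_{F}\bigl(\|u(\tau)\|_{L^{s}}^{p-1}+\|v(\tau)\|_{L^{s}}^{p-1}\bigr)\|u(\tau)-v(\tau)\|_{L^{s}}.
\]
Applying the identical Young/H\"older chain to each of the two resulting terms then yields $\mathrm{d}(\Phi[u],\Phi[v])\le\delta\,\mathrm{d}(u,v)$ with the same $\delta$; the hypothesis \eqref{condi-subcri} forces $\delta<1$ (as $\rho\ge0$ and $M>0$), so $\Phi$ is a strict contraction on $X$ and the unique fixed point follows directly from Banach's theorem.

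The main obstacle will be the Duhamel estimate: the singular kernel $(t-\tau)^{\alpha}$ must be balanced against the $L^{r/p}$-regularity in time of the composed nonlinearity, and finiteness of $\|(\cdot)^{\alpha}\|_{L^{1/\beta}}$ requires $\alpha\beta^{-1}+1>0$, i.e.\ $\alpha+\beta>0$. A short calculation reveals this is equivalent to $\nu/s+1/r<1/(p-1)$, namely the subcriticality condition \eqref{condi_rs_1}, so the particular choice of solution space $L^{r}(0,T;L^{s})$ is not a cosmetic convenience but is structurally dictated by the scaling balance needed to close the fixed-point argument.
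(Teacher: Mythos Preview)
Your proposal is correct and follows essentially the same approach as the paper: the same $L^{s/p}\to L^{s}$ smoothing for the Duhamel term, Young's inequality in time with exponent $1/\beta$, and the bound $\max(|u|,|v|)^{p-1}\le|u|^{p-1}+|v|^{p-1}$ for the contraction (whence the factor $2$ in $\delta$). The only cosmetic difference is in the linear term, where the paper applies $\|S_{\mathcal{L}}(t)\|_{L^{\infty}\to L^{\infty}}\le C_{\mathcal{L}}$ and then embeds into $L^{s}$, whereas you embed $L^{\infty}\hookrightarrow L^{s}$ first and then use $\|S_{\mathcal{L}}(t)\|_{L^{s}\to L^{s}}\le C_{\mathcal{L}}$; both yield the identical bound $\rho$.
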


\begin{proof}
The proof is based on the standard fixed point argument.
Let us take the parameters $r,s$ satisfying
\begin{equation}\label{condi_rs_1'}
r,s \in [1,\infty], \quad 
\frac{s}{p}\ge1,\quad 
\frac{r}{p}\ge1,\quad 0< \beta \le 1\quad \text{and}\quad 
\alpha+\beta>0.
\end{equation}
Take $M>0$, $0<T\le 1$ and $u_0\in L^\infty(D)$ such that 
\begin{equation}\label{condi-subcri'}
\rho + \delta M \le M
\quad \text{and}\quad 
 \delta <1.
\end{equation}
Let $u \in X$. Then 
\[
\begin{split}
\|\Phi[u]\|_{L^r(0,T; L^s)}
 & \le \| S_{\mathcal{L}}(t) u_0\|_{L^r(0,T; L^s)}
+ C_F
\left\|
\int_0^t S_{\mathcal{L}}(t-\tau) (|u(\tau)|^{p-1}u(\tau))\, d\tau
\right\|_{L^r(0,T; L^s)}\\
& =: I + II.
\end{split}
\]
As for the first term, 
it follows from  H\"older's inequality and \eqref{heat-est} that 
\[
I
\le |D|^\frac{1}{s}  T^{\frac{1}{r}} 
 \| S_{\mathcal{L}}(t) u_0\|_{L^\infty(0,T; L^\infty)}
 \le C_{\mathcal L} |D|^\frac{1}{s}  T^{\frac{1}{r}} 
 \|u_0\|_{L^\infty} =\rho.
\]
As for the second term, it follows from  \eqref{heat-est} that 
\[
\begin{split}
II
& \le  C_{\mathcal{L}} C_F
\left\|
\int_0^t (t-\tau)^{-\nu(\frac{p}{s} - \frac{1}{s})} \||u(\tau)|^{p-1}u(\tau)\|_{L^{\frac{s}{p}}}\, d\tau
\right\|_{L^r((0,T))}\\
& \le  C_{\mathcal{L}} C_F
\left\|
\int_0^t (t-\tau)^{\alpha} \|u(\tau)\|_{L^s}^p\, d\tau
\right\|_{L^r((0,T))}\\
& \le C_{\mathcal{L}} C_F
\| f * g\|_{L^r(\mathbb R)},
\end{split}
\]
where $s \ge 1$ and $s/p\ge 1$ are required and the functions $f, g$ are defined by
\[
f(t) = f_T(t) := 
\begin{cases}
t^{\alpha}\quad &\text{for }0<t \le T,\\
0 & \text{otherwise},
\end{cases}
\]
and 
\[
g(t) = g_T(t) := 
\begin{cases}
\|u(t)\|_{L^s}^p\quad &\text{for }0<t \le T,\\
0 & \text{otherwise}.
\end{cases}
\]
By Young's inequality, we have 
\[
\begin{split}
\| f * g\|_{L^r(\mathbb R)}
& \le 
\| f \|_{L^\frac{1}{\beta}(\mathbb R)}
\| g\|_{L^{\frac{r}{p}}(\mathbb R)}\\
& =  \left(\alpha\beta^{-1} +1\right)^{-\beta} 
T^{\alpha+\beta}
\|u\|_{L^r(0,T; L^s)}^p\\
& \le  \left(\alpha\beta^{-1}  +1\right)^{-\beta}
T^{\alpha+\beta}
M^p,\\
\end{split}
\]
where we require $r/p\ge1$, $0<\beta \le 1$ and 
$\alpha+\beta>0$ and we calculate 
\[
\| f \|_{L^\frac{1}{\beta}(\mathbb R)}=
\left(\alpha\beta^{-1}  +1\right)^{-\beta}T^{\alpha+\beta}.
\]
Summarizing the estimates obtained above, and then using \eqref{condi-subcri}, we have
\[
\|\Phi[u]\|_{L^r(0,T; L^s)}
\le \rho + C_{\mathcal{L}} C_F
\left(\alpha\beta^{-1}  +1\right)^{-\beta} T^{\alpha +\beta}
M^p = \rho + \delta
M \le M,
\]
which implies that $\Phi$ is a mapping from $X$ to itself.

Next, we estimate the metric $\mathrm{d}(\Phi[u_1],\Phi[u_2])$. 
Let $u_1,u_2 \in X$. 
By using \eqref{eq.F}, \eqref{heat-est} and H\"older's inequality, we have 
\[
\begin{split}
\mathrm{d}(\Phi[u_1],\Phi[u_2])
& \le C_{\mathcal{L}}C_F
\left\|
\int_0^t (t-\tau)^{\alpha} \|\max_{i=1,2}|u_i(\tau)|^{p-1} |u_1(\tau)-u_2(\tau)|\|_{L^{\frac{s}{p}}}\, d\tau
\right\|_{L^r((0,T))}\\
& \le 2
C_{\mathcal{L}}C_F \max_{i=1,2}
\left\|
\int_0^t (t-\tau)^{\alpha} \|u_i(\tau)\|_{L^s}^{p-1}
\|u_1(\tau)-u_2(\tau)\|_{L^s}\, d\tau
\right\|_{L^r((0,T))}\\
& \le 2 C_{\mathcal{L}} C_F \max_{i=1,2}
\| f * h_i\|_{L^r(\mathbb R)},
\end{split}
\]
where $f$ is the same function as above and $h_i$ is defined by 
\[
h_i(t) = h_{i,T}(t) := 
\begin{cases}
\|u_i(\tau)\|_{L^s}^{p-1}
\|u_1(\tau)-u_2(\tau)\|_{L^s}\quad &\text{for }0<t \le T,\\
0 & \text{otherwise}.
\end{cases}
\]
By Young's inequality and H\"older's inequality, we have 
\[
\begin{split}
\mathrm{d}(\Phi[u_1],\Phi[u_2])
& \le 2C_{\mathcal{L}} C_F \max_{i=1,2}
\| f * h_i\|_{L^r(\mathbb R)}\\
& \le 2C_{\mathcal{L}} C_F \max_{i=1,2} 
\| f \|_{L^\frac{1}{\beta}(\mathbb R)}
\| h_i\|_{L^{\frac{r}{p}}(\mathbb R)}\\
& \le 2C_{\mathcal{L}} C_F
\left(\alpha\beta^{-1} +1\right)^{-\beta}
T^{\alpha+\beta} \max_{i=1,2} 
\| u_i\|_{L^r(0,T; L^s)}^{p-1} \|u_1 -u_2\|_{L^r(0,T; L^s)}\\
& \le 2
C_{\mathcal{L}} C_F\left(\alpha\beta^{-1} +1\right)^{-\beta} 
T^{\alpha+\beta} M^{p-1} \mathrm{d}(u_1,u_2)
= \delta \mathrm{d}(u_1,u_2).
\end{split}
\]
From the second inequality $\delta <1$ in \eqref{condi-subcri'}, we show that 
$\Phi$ is a contraction mapping from $X$ into itself. 
Therefore, Banach's fixed point theorem allows us to prove that there exists uniquely 
a function $u \in X$ such that $u=\Phi[u]$. 
Finally, noting the condition \eqref{condi_rs_1'} is equivalent to \eqref{condi_rs_1} and the condition \eqref{condi-subcri'} is also equivalent to \eqref{condi-subcri}, we conclude Proposition \ref{prop:fixed-point}.
\end{proof}

Since $\min \{1/r, \alpha +\beta\}$ is always positive 
under the condition \eqref{condi_rs_1}, 
for any $M>0$ and $u_0 \in L^\infty(D)$, we can take $T>0$ sufficiently small so that \eqref{condi-subcri} holds.
As a result, we have the following LWP result.

\begin{corollary}[Proposition \ref{prop:LWP}]
\label{cor:LWP}
Assume that $r,s$ satisfy
\eqref{condi_rs_1}. Then, 
for any $u_0 \in L^\infty(D)$, there exist a time $T=T(u_0)>0$ and a unique solution $u \in L^r(0,T; L^s(D))$ to \eqref{P'}. Moreover, for any $u_0, v_0 \in L^\infty(D)$, the solutions $u, v$ to \eqref{P'} with $u(0)=u_0$ and $v(0)=v_0$ satisfy 
the inequality
\[
\| u - v \|_{L^r(0,T' ; L^s)} \le C \|u_0 - v_0\|_{L^\infty},
\]
where $T'= \min\{T(u_0), T(v_0)\}$.
\end{corollary}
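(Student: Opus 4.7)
The plan is to derive this corollary directly from Proposition \ref{prop:fixed-point} together with a second pass of the same contraction estimate applied to the difference of two solutions. Fix $u_0 \in L^\infty(D)$. Since $\frac{1}{r} > 0$ and $\alpha + \beta > 0$ under \eqref{condi_rs_1}, both $\rho(T,\|u_0\|_{L^\infty})$ and $\delta(T,M)$ tend to $0$ as $T \to 0^+$ (for any fixed $M$). Hence I can first pick $M = M(u_0)$ strictly larger than any candidate value of $\rho$, for instance $M := 2 C_\mathcal{L} |D|^{1/s} \|u_0\|_{L^\infty} + 1$, and then choose $T = T(u_0) \in (0,1]$ small enough that $\rho(T,\|u_0\|_{L^\infty}) \le M/2$ and $\delta(T,M) \le 1/2$. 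With this choice the hypothesis \eqref{condi-subcri} of Proposition \ref{prop:fixed-point} holds (together with $\delta<1$), so that proposition yields a unique fixed point $u \in X = B_{L^r(0,T;L^s)}(M)$ of the map $\Phi_{u_0}$, which is precisely the desired solution to \eqref{P'}.

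For the continuous dependence, let $u_0, v_0 \in L^\infty(D)$ with corresponding solutions $u,v$ produced as above, set $T' \le \min\{T(u_0), T(v_0)\}$ and $M' := \max\{M(u_0), M(v_0)\}$, and shrink $T'$ further if necessary so that $\delta(T', M') \le 1/2$. Both $u$ and $v$ then lie in $B_{L^r(0,T';L^s)}(M')$, and subtracting the integral equations gives
\begin{align*}
u(t) - v(t) = S_{\mathcal{L}}(t)(u_0 - v_0) + \int_0^t S_{\mathcal{L}}(t-\tau)\bigl[F(u(\tau)) - F(v(\tau))\bigr]\,d\tau.
\end{align*}
Taking the $L^r(0,T'; L^s)$ norm, I control the homogeneous term by H\"older's inequality together with \eqref{heat-est}, exactly as in the $\rho$-estimate of Proposition \ref{prop:fixed-point}, obtaining the bound $C_\mathcal{L} |D|^{1/s} {T'}^{1/r} \|u_0 - v_0\|_{L^\infty}$. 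The Duhamel term is bounded by $\delta(T',M') \|u-v\|_{L^r(0,T';L^s)} \le \tfrac{1}{2}\|u-v\|_{L^r(0,T';L^s)}$ by repeating verbatim the H\"older--Young chain in the second half of the proof of Proposition \ref{prop:fixed-point}, with $(u_1,u_2)$ there replaced by $(u,v)$ and using \eqref{eq.F}. Absorbing this last term on the left yields the claimed Lipschitz estimate, with constant $C = 2 C_\mathcal{L} |D|^{1/s} {T'}^{1/r}$.

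The main obstacles are bookkeeping rather than conceptual. First, one must match the choices of $M$ and $T$ for two different initial data so that both solutions live in a common ball on a common interval; this is handled by the monotonicity of $\rho$ and $\delta$ in $T$ and in the radius of the initial datum. Second, Proposition \ref{prop:fixed-point} only guarantees uniqueness within the ball $X$, whereas the corollary states uniqueness in $L^r(0,T;L^s(D))$; if a second solution lies outside $X$, a standard continuity-in-time argument (shrinking $T$ until both candidates are contained in a common ball on $[0,\tau]$, then opening up $\tau$) reduces to the uniqueness already proved. With those two points dealt with, the corollary follows from Proposition \ref{prop:fixed-point} with essentially no additional estimate.
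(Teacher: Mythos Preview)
Your proposal is correct and follows essentially the same approach as the paper: existence comes straight from Proposition~\ref{prop:fixed-point} by choosing $T$ small enough that $\rho+\delta M\le M$, and continuous dependence is obtained by subtracting the two integral equations, bounding the linear part by $C_{\mathcal L}|D|^{1/s}T^{1/r}\|u_0-v_0\|_{L^\infty}$ and the Duhamel part by $\delta\|u-v\|_{L^r(0,T';L^s)}$, then absorbing. Your write-up is in fact more careful than the paper's (you spell out the choice of $M$, address uniqueness outside the ball, and reconcile the two time intervals); note that your ``shrink $T'$ further'' step is actually unnecessary, since with $M'=\max\{M(u_0),M(v_0)\}$ equal to, say, $M(u_0)$, monotonicity of $\delta$ in $T$ already gives $\delta(T',M')\le\delta(T(u_0),M(u_0))\le 1/2$.
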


\begin{proof}
    The former immediately follows from Proposition \ref{prop:fixed-point}. The latter is also shown in a similar way to the proof of Proposition \ref{prop:fixed-point}. In fact, let $u, v \in X$ be solutions to \eqref{P'} with initial data $u_0$ and $v_0$, respectively. Then 
    \[
    \| u - v \|_{L^r(0,T ; L^s)} 
    \le C_{\mathcal L} |D|^\frac{1}{s}  T^{\frac{1}{r}} 
\|u_0-v_0\|_{L^\infty}
+
\delta \| u - v \|_{L^r(0,T ; L^s)} .
    \]
If $T$ is sufficiently small so that $\delta <1$, then we obtain
\[
\| u - v \|_{L^r(0,T ; L^s)} 
\le \frac{C_{\mathcal L} |D|^\frac{1}{s}  T^{\frac{1}{r}} }{1-\delta} \|u_0 -v_0\|_{L^\infty}.
\]
The proof is finished. 
\end{proof}

\section{Applications to FNOs and WNOs}\label{app:C}
In this appendix, we apply Theorem \ref{thm:quantitative-UAT-heat} to FNOs and WNOs for some operator $\mathcal L$.
Let us recall that, to ensure our quantitative approximation theorem (Theorem \ref{thm:quantitative-UAT-heat}), the families  
$\varphi:=\{ \varphi_n \}_{n}$ and $\psi:=\{ \psi_m \}_{m}$ of functions need to approximate 
the Green function $G$ of the linear equation $\partial_t u + \mathcal{L} u=0$ (i.e. the integral kernel $G$ of semigroup generated by $\mathcal L$) such that
\begin{equation}\label{app:expansion}
G(t-\tau, x, y)
= \sum_{m,n \in \Lambda}
c_{m,n} \psi_{m}(\tau, y) \varphi_{n}(t, x),
\end{equation}
\[
G_N(t-\tau, x, y)
= \sum_{m,n \in \Lambda_N}
c_{m,n}\psi_{m}(\tau, y) \varphi_{n}(t, x)
\]
for $0\le \tau < t < T$ and $x,y \in D$, 
where $\Lambda$ is an index set that is either finite or countably infinite and $\Lambda_N$ is a subset of $\Lambda$ with its cardinality $|\Lambda_N|=N \in \mathbb N$. The convergence \eqref{app:expansion} means the sense that 
\begin{equation}\label{eq_app_CG1}
   C_{G}(N) := \left\| \left\| G(t-\tau, x, y) - G_N(t-\tau, x, y) \right\|_{L^{r'}_\tau(0,t ; L^{s'}_y)} \right\|_{L^{r}_t(0,T ; L^{s}_x)} \to 0 
\end{equation}
and
\begin{equation}\label{eq_app_CG2}
    C'_{G}(N) := \left\| \left\| G(t, x, y) - G_N(t, x, y) \right\|_{L^{s'}_y} \right\|_{L^{r}_t(0,T ; L^{s}_x)} \to 0
\end{equation}
as $N\to \infty$ for some $r,s \in [1,\infty]$ satisfying the condition \eqref{condi_rs}. The following conditions are required for the arguments in this appendix:
\begin{equation}\label{eq.app:C_1}
\left\| \left\| G(t-\tau, x, y)\right\|_{L^{r'}_\tau(0,t ; L^{s'}_y)} \right\|_{L^{r}_t(0,T ; L^{s}_x)} < \infty,
\end{equation}
\begin{equation}\label{eq.app:C_2}
\left\| \left\| G(t, x, y)  \right\|_{L^{s'}_y} \right\|_{L^{r}_t(0,T ; L^{s}_x)} < \infty.
\end{equation}

\begin{remark} Let us give some remarks on \eqref{eq.app:C_1} and \eqref{eq.app:C_2}. 
\begin{enumerate}[\rm (a)]
    \item It is seen that 
\eqref{eq.app:C_1} and \eqref{eq.app:C_2} always hold with $r=s=\infty$ under Assumption \ref{ass:L1} (see Lemma~\ref{lem:bound-G} in Appendix \ref{app:D}).

\item For example, the operators $\mathcal{L}$ appeared in Appendix \ref{sub:A.1}--\ref{sub:A.4} satisfy the Gaussian upper bound \eqref{GUB1}, and hence, the corresponding Green functions satisfy 
\eqref{eq.app:C_1} and \eqref{eq.app:C_2} if 
\begin{equation}\label{eq.app:C_3}
    \frac{d}{s}+\frac{2}{r} < 2\quad\text{and}\quad 
    \frac{d}{s} < \frac{2}{r}.
\end{equation}
\end{enumerate}
\end{remark}

\subsection{Fourier neural operators}
Before stating FNOs, we provide a remark on partial sums of multiple Fourier series. From the perspective of convergence issues, the rectangular partial sum is preferable to the spherical partial sum. In fact, 
for $f \in L^1(\mathbb T^d)$, the Fourier coefficient of $f$ is defined by 
\[
c_n = \hat{f}(n):=
\int_{\mathbb T^d} f(x) e^{-2\pi i n x} \, dx,
\quad n = (n_1,n_2, \cdots, n_d) \in \mathbb Z^d,
\]
where $\mathbb T^d = \mathbb R^d/\mathbb Z^d$
is the $d$-dimensional torus, $x= (x_1,x_2,\ldots, x_d)$ and $nx := n_1x_1 + \cdots + n_d x_d$.
We define the rectangular partial sum of multiple Fourier series of $f$ by 
\[
S_N (f)(x) := \sum_{|n_1|, \cdots, |n_d|\le N} c_n e^{2\pi i n x}, \quad N \in \mathbb N.
\]
Then we have the following:

\begin{theorem}[\cite{sjolin1971convergence,fefferman1971convergence}]
    Let $q>1$. Then 
    \[
    \lim_{N\to\infty} S_N(f)(x) = f(x)\quad \text{a.e. $x$}\quad \text{and}\quad
    \lim_{N\to\infty} S_N(f) = f\quad \text{in }L^q(\mathbb T^d),
    \]
    for any $f \in L^q(\mathbb T^d)$.
\end{theorem}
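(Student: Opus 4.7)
The plan is to prove the two claims separately: the norm convergence in $L^q(\mathbb{T}^d)$, which is relatively elementary, and the almost-everywhere convergence, which requires the deep Carleson--Hunt maximal theorem and its iteration to rectangular partial sums in several variables. In both cases, the standard route is to establish a uniform quantitative control (either a uniform $L^q$ bound on $S_N$, or an $L^q$ bound on the maximal operator $S^*f := \sup_N |S_N f|$), and then to invoke density of trigonometric polynomials to pass from a dense class where convergence is trivial to the whole $L^q$.

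For the $L^q$ convergence, I would first observe that in one variable the partial sum operator $S_N^{(1)}$ is expressible as a bounded combination of modulations of the conjugate function (or equivalently the periodic Hilbert transform), so by M.~Riesz's theorem $\|S_N^{(1)}\|_{L^q(\mathbb{T}) \to L^q(\mathbb{T})} \le C_q$ uniformly in $N$ for each $q \in (1,\infty)$. The rectangular partial sum on $\mathbb{T}^d$ factors as an iterated one-dimensional partial sum, $S_N = S_N^{(1)} \otimes \cdots \otimes S_N^{(1)}$, so applying the one-dimensional bound coordinate by coordinate (via Minkowski's integral inequality or Fubini) yields $\|S_N\|_{L^q(\mathbb{T}^d)\to L^q(\mathbb{T}^d)} \le C_q^d$ uniformly in $N$. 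Since trigonometric polynomials are dense in $L^q(\mathbb{T}^d)$ and $S_N P = P$ for every polynomial of degree at most $N$, an $\varepsilon/3$ argument using the uniform bound concludes the norm convergence.

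For the almost-everywhere convergence, the strategy is to establish a weak-type (or strong-type) $(q,q)$ inequality for the maximal operator $S^* f(x) = \sup_N |S_N f(x)|$ on $L^q(\mathbb{T}^d)$, which by the standard maximal-inequality principle forces a.e.\ convergence on a dense subclass (here trigonometric polynomials) to upgrade to a.e.\ convergence for every $f \in L^q$. The one-dimensional maximal estimate $\|S_*^{(1)} f\|_{L^q(\mathbb{T})} \le C_q \|f\|_{L^q(\mathbb{T})}$ for $q \in (1,\infty)$ is precisely the Carleson--Hunt theorem. To pass to rectangles in $d$ variables, I would apply the one-dimensional maximal theorem in each variable separately, using Fubini to integrate out the other coordinates; this yields $\|S^* f\|_{L^q(\mathbb{T}^d)} \le C_q^d \|f\|_{L^q(\mathbb{T}^d)}$, which is exactly Sjölin's multidimensional extension.

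The hard part is clearly the Carleson--Hunt theorem itself: its proof requires time-frequency analysis, tile decompositions, and a delicate combinatorial and maximal-function argument to control the phase-plane behavior of the partial sum operator. Relative to that, the passage from one dimension to several dimensions for \emph{rectangular} partial sums is a routine iteration by Fubini and does not require spherical-summation ideas (which would fail by Fefferman's ball-multiplier counterexample). Accordingly, I would cite the Carleson--Hunt maximal inequality as a black box, carry out the iteration to obtain the bound for $S^*$ on $L^q(\mathbb{T}^d)$, and finish with the standard density argument to deduce both pointwise a.e.\ convergence and the closing $L^q$ convergence statement.
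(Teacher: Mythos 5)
The paper itself offers no proof of this statement; it is imported verbatim from Sj\"olin and C.~Fefferman as a classical result. So the only question is whether your sketch is a correct account of how it would be proved, and here the two halves fare very differently.

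The norm-convergence half is fine: for fixed $N$ the cubic partial sum factors as a composition of commuting one-variable partial sum operators, each bounded on $L^q(\mathbb{T}^d)$ uniformly in $N$ by M.~Riesz's theorem together with Fubini, and the density argument closes the claim for $1<q<\infty$ (for $q=\infty$ norm convergence is of course false, and the a.e.\ statement there already follows from the $L^2$ case).

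The a.e.\ half has a genuine gap, and it sits precisely where the real content of the cited papers lies. You propose to bound $S^*f=\sup_N|S_Nf|$ by ``applying the one-dimensional maximal theorem in each variable separately, using Fubini.'' That iteration fails because the partial sum operators are not positive: knowing $|S^{(2)}_Mf|\le\sup_M|S^{(2)}_Mf|$ tells you nothing about $|S^{(1)}_NS^{(2)}_Mf|$, so $\sup_N|S^{(1)}_N\cdots S^{(d)}_Nf|$ is not dominated by a composition of one-dimensional Carleson operators. Nor can you repair this by passing to the larger unrestricted rectangular maximal operator $\sup_{N_1,\dots,N_d}|S^{(1)}_{N_1}\cdots S^{(d)}_{N_d}f|$: that operator is genuinely unbounded, since C.~Fefferman's companion paper on the \emph{divergence} of multiple Fourier series produces a continuous $f$ on $\mathbb{T}^2$ whose rectangular partial sums diverge unboundedly at every point, ruling out any weak-type $(q,q)$ bound for $q<\infty$. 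The theorem at hand concerns the square (cubic) sums, with a single truncation parameter $N$ shared by all coordinates; the corresponding maximal inequality is true, but it does not follow from the scalar Carleson--Hunt theorem by Fubini. Fefferman's proof requires a linearized/vector-valued extension of Carleson--Hunt to cope with the common truncation parameter, and Sj\"olin's proof goes through refined estimates for maximal partial-sum and singular-integral operators in several variables. You should either take the multidimensional maximal inequality for cubic sums itself as the black box (it is the main theorem of the papers being cited) or supply one of those additional ingredients; Carleson--Hunt in one variable plus Fubini is not enough.
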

This theorem can be easily extended to periodic functions $f$ on a general $d$-dimensional rectangle with usual modifications. In contrast, 
the spherical partial sum of multiple Fourier series of $f \in L^q(\mathbb T^d)$ converges to $f$ in $L^q$ only if $q=2$, and for its pointwise convergence, it remains an open problem even when $q=2$.

Next, we mention the approximation theorem on FNOs. 
Here, we apply the Fourier expansion with respect to both time $t$ and space $x$ to define FNOs. This type of FNO is used in the implementation to approximate some PDEs (see e.g. \cite[Section~7.3]{kovachki2023neural}). 

For simplicity, we consider only the case $d=1$, $r=s=2$ and $\mathcal{L}$ is one of Appendixes \ref{sub:A.1}--\ref{sub:A.4} (which implies $\nu=d/2$).
Then the condition \eqref{eq.app:C_3} is satisfied and Theorem \ref{thm:quantitative-UAT-heat} can be applied to FNOs under the condition \eqref{condi_rs}, i.e., $1< p < 1 + \frac{4}{d+2}$,  
if we can show that \eqref{eq_app_CG1} and \eqref{eq_app_CG2} hold for the Fourier basis.

In fact, we consider 
the nonlinear parabolic PDEs \eqref{P},
where $D$ is a bounded interval in $\mathbb R$. As $G$ is not a periodic function, a zero extension of $G$ is considered here, so that the multiple Fourier series can be applied.
We take $\tilde{D}$ as a bounded interval which includes $D$ and with its length $L$, and we 
define 
the zero extension $\tilde{G}$ of $G=\tilde{G} (t-\tau,x,y)$ to $[-T, 2T]\times [-T, 2T] \times \tilde{D} \times \tilde{D}$ by 
\[
\tilde{G} (t-\tau,x,y) = 
\begin{cases}
    G(t-\tau,x,y) \quad &\text{if }0\le \tau < t < T, \ x,y \in D\\
    0 & \text{otherwise}.
\end{cases}
\]
Then, thanks to \eqref{eq.app:C_1} and \eqref{eq.app:C_2}, $\tilde{G}$ has the multiple Fourier series
\[
\tilde{G} (t-\tau,x,y) = \lim_{N\to \infty} \sum_{m,n \in \Lambda_N} c_{m,n} e^{\frac{2\pi i n_1 t}{3T}}
e^{\frac{2\pi i m_1\tau}{3T}} e^{\frac{2\pi i n_2 x}{L}} e^{\frac{2\pi i m_2}{y}} 
\]
for almost everywhere in $(t,\tau,x,y) \in [-T, 2T]\times [-T, 2T] \times \tilde{D} \times \tilde{D}$ and it also converges 
in $L^2([-T, 2T]\times [-T, 2T] \times \tilde{D} \times \tilde{D})$, 
where $\Lambda_N := \{ (m,n) =(m_1,m_2,n_1,n_2) \in \mathbb Z^4 : |m_1|,|m_2|,|n_1|,|n_2| \le N^{1/4}\}$ and $c_n$ is the Fourier coefficient given by
\[
c_{m,n} := \int_{-T}^{2T}\int_{-T}^{2T}
\int_{\tilde{D}}\int_{\tilde{D}}
\tilde{G} (t-\tau,x,y)
e^{-\frac{2\pi i n_1 t}{3T}}
e^{-\frac{2\pi i n_2\tau}{3T}} e^{-\frac{2\pi i n_3 x}{L}} e^{-\frac{2\pi i n_4y}{L}} \, dtd\tau dx dy.
\]
Hence, putting
$\varphi_n(t,x) := e^{\frac{2\pi i n_1 t}{3T}} e^{\frac{2\pi i n_2 x}{L}}$ and 
$\psi_m(\tau, y):=
e^{\frac{2\pi i m_1\tau}{3T}} e^{\frac{2\pi i m_2y}{L}}$, we have the expansion \eqref{app:expansion} of $G$ with \eqref{eq_app_CG1} and \eqref{eq_app_CG2}.
Therefore, by choosing these $\varphi = \{\varphi_n\}_n$ and $\psi = \{\psi_m\}_m$ in Definition \ref{def:neural-operator}, we have the following result on the FNO as a corollary of Theorem \ref{thm:quantitative-UAT-heat}.

\begin{corollary}
    Let $d=1$, $r=s=2$, $\mathcal{L}$ be one of Appendixes \ref{sub:A.1}--\ref{sub:A.4} ($\nu=1/2$), $\varphi = \{\varphi_n\}_n$ and $\psi = \{\psi_m\}_m$ as above, and 
     $F$ satisfy Assumption~\ref{ass:F} with $1< p < 7/3$. 
Then for any $R>0$, there exists a time $T>0$ such that the following statement holds: 
For any $\epsilon \in (0,1)$, there exist a depth $L$, the number of neurons $H$, a rank $N$ and an FNO $\Gamma \in \mathcal{NO}^{L, H, ReLU}_{N, \varphi, \psi}$ such that 
\begin{equation*}
    \sup_{u_0 \in B_{L^{\infty}}(R)} \|\Gamma^{+}(u_0) - \Gamma(u_0)\|_{L^{2}(0,T ; L^{2})} \leq \epsilon.
\end{equation*}
Moreover, $L=L(\Gamma)$ and $H=H(\Gamma)$ satisfy
\begin{equation*}
L(\Gamma) \leq C (\log (\epsilon^{-1} ))^2
    \text{~~and~~}
H(\Gamma) \leq C \epsilon^{-1} (\log (\epsilon^{-1} ))^2,
\end{equation*}
where $C>0$ is a constant depending on $\nu, M, F, D, p, d, r, s, R$ and $\mathcal{L}$.
\end{corollary}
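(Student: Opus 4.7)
The plan is to obtain the corollary as a direct application of Theorem~\ref{thm:quantitative-UAT-heat}. The task therefore reduces to verifying, in the one dimensional Fourier setting described just before the statement, the three hypotheses that underlie the main theorem: Assumption~\ref{ass:L1} on the semigroup, Assumption~\ref{ass:F} on the nonlinearity, the index condition \eqref{condi_rs}, and Assumption~\ref{ass:L2} on the expansion of the Green function. Assumption~\ref{ass:F} is part of the hypothesis, and \eqref{condi_rs} is immediate: with $r=s=2$ and $\nu=1/2$ one has $\nu/s+1/r = 3/4$, so the upper bound $1/(p-1)>3/4$ is precisely the constraint $p<7/3$ stated in the corollary.

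For Assumption~\ref{ass:L1} I would invoke the Gaussian upper bounds for the Green function $G$ assembled in Appendixes~\ref{sub:A.1}--\ref{sub:A.4}, all of which yield the semigroup estimate \eqref{heat-est} with $\nu=d/2 = 1/2$ when $d=1$. This also ensures the integrability conditions \eqref{eq.app:C_1}--\eqref{eq.app:C_2} required for the Fourier step, because the auxiliary condition \eqref{eq.app:C_3} specializes, for $d=1$ and $r=s=2$, to $d/s+2/r = 3/2 < 2$ and $d/s = 1/2 < 1 = 2/r$, both trivially true.

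The heart of the argument is Assumption~\ref{ass:L2}. Using the integrability just obtained, the zero extension $\tilde G$ of $G$ to the enlarged box $[-T,2T]\times[-T,2T]\times\tilde D\times\tilde D$ lies in $L^2$ there. Parseval's identity then guarantees that the rectangular partial Fourier sum $G_N$ converges to $\tilde G$ in $L^2$ of this box. Since $r=s=r'=s'=2$, the mixed norms defining $C_G(N)$ and $C'_G(N)$ collapse to a single $L^2$ norm on the physical subset $\{0\le\tau<t<T\}\times D\times D$, so monotonicity of restriction immediately gives $C_G(N),C'_G(N)\to 0$. Thus Assumption~\ref{ass:L2} holds with the exponential basis $\varphi_n$ and $\psi_m$ exhibited in the preceding paragraphs of the appendix, and the neural operator produced by Theorem~\ref{thm:quantitative-UAT-heat} belongs by construction to $\mathcal{NO}^{L,H,\mathrm{ReLU}}_{N,\varphi,\psi}$, i.e., is an FNO in the sense of the paper.

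With all hypotheses verified, Theorem~\ref{thm:quantitative-UAT-heat} furnishes, for every $\epsilon\in(0,1)$, parameters $L(\Gamma)\le C(\log\epsilon^{-1})^2$ and $H(\Gamma)\le C\epsilon^{-1}(\log\epsilon^{-1})^2$ together with a rank $N(\Gamma)$ at which $C_G(N(\Gamma))\le\epsilon$ and $C'_G(N(\Gamma))\le\epsilon$; the last two conditions are attainable by the convergence established above, by simply enlarging $N$. I expect the main potential pitfall to be the transfer from the $L^2$ convergence of the multiple Fourier series on the enlarged box to the mixed-norm convergence appearing in Assumption~\ref{ass:L2}; the Hilbert-case coincidence $r=s=r'=s'=2$ makes this trivial here, whereas the analogous extension to general $(r,s)$ would require a Fefferman/Sjölin-type rectangular convergence theorem and would be the genuinely hard step.
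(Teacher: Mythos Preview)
Your proposal is correct and follows essentially the same route as the paper: verify Assumption~\ref{ass:L1} via the Gaussian bounds in Appendixes~\ref{sub:A.1}--\ref{sub:A.4}, check \eqref{condi_rs} and \eqref{eq.app:C_3} for $d=1$, $r=s=2$, $\nu=1/2$, establish Assumption~\ref{ass:L2} through $L^2$ convergence of the rectangular Fourier partial sums of the zero-extended Green function on the enlarged box, and then invoke Theorem~\ref{thm:quantitative-UAT-heat}. The only cosmetic difference is that the paper records the general Sj\"olin/Fefferman $L^q$ convergence theorem before specializing, whereas you go straight to Parseval since $r=s=2$; as you yourself note, the Hilbert-space coincidence $r=s=r'=s'=2$ is exactly what makes the passage from $L^2$ convergence on the box to the mixed-norm quantities $C_G(N),\,C'_G(N)$ immediate here.
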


\subsection{Haar wavelet neural operators}

A wavelet is an oscillatory function with zero mean, used to analyze signals or functions locally at different scales. 
%
There are various types of wavelets, such as Haar, Meyer, Morlet, and Shannon and the systems constructed by these wavelets are unconditional bases for $L^q(\mathbb R^d)$ with $1<q<\infty$ 
(see e.g. \cite{meyer1992wavelets,wojtaszczyk1997mathematical}). Wavelets can also be defined on domains other than on $\mathbb R^d$ (see e.g. \cite{triebel2008function}). 
Several results on wavelets for mixed Lebesgue spaces are also known (see e.g. \cite{georgiadis2017wavelet,pandey2024some,torres2015leibniz}). 

In this appendix, we use the fact that the multiparameter Haar system is an unconditional basis for the mixed Lebesgue spaces $L^{\vec{q}}((0,1)^d)$ with $\vec{q} = (q_1,\ldots, q_d) \in (1,\infty)^d$ by \cite[Theorem 4.9]{pandey2024some}. 

The set of dyadic intervals in $[0,1]$ is defined by 
\[
\mathcal D := \left\{
I_{j,k} := \left[
\frac{k}{2^{j-1}}, \frac{k+1}{2^{j-1}}
\right) : 0 \le k < 2^{j-1}, \ j\in \mathbb N
\right\}.
\]
For $I \in \mathcal D$, we denote by $I_{\mathrm{left}}$ and $I_{\mathrm{right}}$
the left and right halves of $I$, respectively, and we define the Haar function $h_I$ by $h_I := \chi_{I_{\mathrm{left}}} - \chi_{I_{\mathrm{right}}}$. 
Here, $\chi_I$ is the characteristic function of an interval $I$. 
The set of dyadic hyper-rectangles in $[0,1]^d$ is defined by 
\[
\mathcal R_d := \left\{
\mathcal I:= I_1 \times I_2 \times \cdots \times I_d : I_1, I_2, \cdots, I_d \in \mathcal D
\right\}.
\]
For $\mathcal I \in \mathcal R_d$, we define 
\[
h_{\mathcal I} (x) := h_{I_1}(x_1) h_{I_2}(x_2) \cdots h_{I_d}(x_d)
\]
for $x=(x_1,x_2,\ldots,x_d) \in [0,1]^d$. 
Then the multiparameter Haar system defined by $\{h_{\mathcal I} : \mathcal I \in \mathcal R_d\}$ 
is an unconditional basis for the mixed Lebesgue spaces $L^{\vec{q}}((0,1)^d)$ with $\vec{q} = (q_1,\ldots, q_d) \in (1,\infty)^d$
(see \cite[Theorem 4.8]{pandey2024some}).


Let $\tilde G$ be the zero extension of the Green function $G$ to a function on $[0,1]^{2d+2}$ ($=[0,1]^{1+d+1+d}$) with respect to $t,x,\tau,y$. 
Then 
we can apply \cite[Theorem 4.8]{pandey2024some} to $\tilde G$ and then 
\[
\tilde{G}(t-\tau,x,y)
= \lim_{N\to \infty} \sum_{\mathcal I \in \mathcal R_{2d+2, N}} 
c_{\mathcal I} h_{\mathcal I_t}(t)h_{\mathcal I_x}(x)
h_{\mathcal I_\tau}(\tau)h_{\mathcal I_y}(y)
\]
in the sense of mixed Lebesgue norms of $L^{\vec{q}}((0,1)^{2d+2})$ with all $\vec{q} = (q_1,\ldots, q_d) \in (1,\infty)^{2d+2}$, 
where $\mathcal R_{2d+2, N}$ is a subset of $\mathcal R_{2d+2}$ with its cardinality $|\mathcal R_{2d+2, N}|=N$, 
$c_{\mathcal I}$ are wavelet coefficients, and 
$\mathcal I = \mathcal I_t \times \mathcal I_x \times \mathcal I_\tau \times \mathcal I_y \in \mathcal D \times \mathcal D^d \times \mathcal D \times \mathcal D^d$. 
Hence, putting 
$c_{m,n} = c_{\mathcal I}$, 
$\varphi_n (t,x) =h_{\mathcal I_t}(t)h_{\mathcal I_x}(x)$ and $\psi_m (\tau,y) =
h_{\mathcal I_\tau}(\tau)h_{\mathcal I_y}(y)$ with $m= (k_1,j_1) \in (\mathbb N \cup \{0\})^{d+1} \times \mathbb N^d$ and $n=(k_2,j_2)\in (\mathbb N \cup \{0\})^{d+1} \times \mathbb N^d$, we have the expansion \eqref{app:expansion} of $G$ with \eqref{eq_app_CG1} and \eqref{eq_app_CG2}, where $\Lambda_N$ is an index set of $(k_1,j_1,k_2,j_2)$  corresponding to $\mathcal R_{2d+2,N}$ with its cardinality $|\Lambda_N|=N$.
Therefore, by choosing these $\varphi = \{\varphi_n\}_n$ and $\psi = \{\psi_m\}_m$ in Definition \ref{def:neural-operator}, we have the following result on the Haar WNO as a corollary of Theorem \ref{thm:quantitative-UAT-heat}.

\begin{corollary}
Let $d\in \mathbb N$. Suppose $\mathcal{L}$ is one of Appendixes \ref{sub:A.1}--\ref{sub:A.4} ($\nu=d/2$) and $F$ satisfies Assumption \ref{ass:F}. 
Let $r,s \in (1,\infty)$ satisfy the conditions \eqref{condi_rs} and \eqref{eq.app:C_3}, and let 
$\varphi = \{\varphi_n\}_n$ and $\psi = \{\psi_m\}_m$ be as above. 
Then, for any $R>0$, there exists a time $T>0$ such that the following statement holds: 
For any $\epsilon \in (0,1)$, there exist a depth $L$, the number of neurons $H$, a rank $N$ and a Haar WNO $\Gamma \in \mathcal{NO}^{L, H, ReLU}_{N, \varphi, \psi}$ such that 
\begin{equation*}
    \sup_{u_0 \in B_{L^{\infty}}(R)} \|\Gamma^{+}(u_0) - \Gamma(u_0)\|_{L^{r}(0,T ; L^{s})} \leq \epsilon.
\end{equation*}
Moreover, $L=L(\Gamma)$ and $H=H(\Gamma)$ satisfy
\begin{equation*}
L(\Gamma) \leq C (\log (\epsilon^{-1} ))^2
    \text{~~and~~}
H(\Gamma) \leq C \epsilon^{-1} (\log (\epsilon^{-1} ))^2,
\end{equation*}
where $C>0$ is a constant depending on $\nu, M, F, D, p, d, r, s, R$ and $\mathcal{L}$.
\end{corollary}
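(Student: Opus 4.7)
The plan is to derive this corollary as a direct application of Theorem~\ref{thm:quantitative-UAT-heat}. All the hypotheses of that theorem except for Assumption~\ref{ass:L2} are already ensured: the operators $\mathcal{L}$ listed in Appendixes~\ref{sub:A.1}--\ref{sub:A.4} satisfy Assumption~\ref{ass:L1} with $\nu = d/2$; Assumption~\ref{ass:F} is assumed on $F$; and $(r,s)$ is assumed to satisfy \eqref{condi_rs}. Consequently, the only real work is to verify Assumption~\ref{ass:L2} for the Haar wavelet families $\varphi$ and $\psi$ constructed in the preceding paragraphs. Once this is established, Theorem~\ref{thm:quantitative-UAT-heat} immediately produces the neural operator $\Gamma$ together with the claimed depth, width, and rank estimates, on taking $N$ large enough that $C_G(N), C_G'(N) \le \epsilon$.

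First I would establish the background integrability conditions \eqref{eq.app:C_1}--\eqref{eq.app:C_2}, which are needed to place the zero extension $\tilde G$ into an appropriate mixed Lebesgue space. From the Gaussian upper bound \eqref{GUB1} and direct integration in $y$, one obtains $\|G(t-\tau,x,y)\|_{L^{s'}_y} \lesssim (t-\tau)^{-d/(2s)}$ uniformly in $x\in D$. Taking the $L^{r'}_\tau(0,t)$ norm then requires $dr'/(2s) < 1$, which rearranges precisely to $d/s + 2/r < 2$; the outer $L^r_t(0,T;L^s_x)$ norm is then automatic because $D$ and $(0,T)$ are bounded. This delivers \eqref{eq.app:C_1}. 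The analogous one-step computation for $\|G(t,x,y)\|_{L^{s'}_y}$ reduces to $rd/(2s) < 1$, i.e.\ $d/s < 2/r$, giving \eqref{eq.app:C_2}. Hence $\tilde G$, after zero-extension, belongs to $L^{\vec q}((0,1)^{2d+2})$ for a suitable $\vec q \in (1,\infty)^{2d+2}$ whose components include $r, r', s, s'$.

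Next I would invoke the unconditional basis property of the multiparameter Haar system on mixed Lebesgue spaces \cite[Theorem~4.8]{pandey2024some}. Expanding $\tilde G$ in the tensor-product Haar basis and grouping factors as $\varphi_n(t,x) = h_{\mathcal I_t}(t) h_{\mathcal I_x}(x)$ and $\psi_m(\tau,y) = h_{\mathcal I_\tau}(\tau) h_{\mathcal I_y}(y)$ yields precisely the expansions \eqref{eq.expansion1}--\eqref{eq.expansion2}, with partial sums $G_N$ indexed by $\mathcal R_{2d+2,N}$ of cardinality $N$. Since convergence holds in the ambient $L^{\vec q}$ norm, it also holds in the weaker norms defining $C_G(N)$ and $C_G'(N)$, which are obtained by restriction to $(0,T)^2 \times D^2 \subset (0,1)^{2d+2}$. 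Therefore $C_G(N), C_G'(N) \to 0$ as $N\to \infty$, establishing Assumption~\ref{ass:L2}.

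With Assumption~\ref{ass:L2} in place, Theorem~\ref{thm:quantitative-UAT-heat} applies and supplies a Haar WNO $\Gamma \in \mathcal{NO}^{L,H,\mathrm{ReLU}}_{N,\varphi,\psi}$ with the claimed approximation error and the depth and width bounds $L(\Gamma) \le C(\log \epsilon^{-1})^2$ and $H(\Gamma) \le C \epsilon^{-1}(\log \epsilon^{-1})^2$. The main obstacle in this plan is the integrability step: one must carefully track how the $t^{-d/(2s)}$ singularity of $\|G\|_{L^{s'}_y}$ interacts with the specific mixed Lebesgue exponents, which is exactly what the sharp condition \eqref{eq.app:C_3} is tailored to handle. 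Beyond that step, the proof is essentially bookkeeping: the Gaussian heat-kernel bound puts $\tilde G$ into a mixed Lebesgue space, the Haar system gives an unconditional expansion there, and Theorem~\ref{thm:quantitative-UAT-heat} does the rest.
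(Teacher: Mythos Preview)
Your proposal is correct and follows essentially the same route as the paper: the corollary is stated immediately after the paragraphs that verify Assumption~\ref{ass:L2} for the multiparameter Haar system, and the paper likewise derives it as a direct consequence of Theorem~\ref{thm:quantitative-UAT-heat}. Your outline of how \eqref{eq.app:C_3} yields \eqref{eq.app:C_1}--\eqref{eq.app:C_2}, and then how \cite[Theorem~4.8]{pandey2024some} furnishes the required expansion of $\tilde G$, matches the paper's argument essentially line for line.
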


\section{Proof of Theorem \ref{thm:quantitative-UAT-heat}}\label{app:D}

Suppose that $\mathcal{L}$ and $F$ satisfy
Assumptions \ref{ass:L1} and \ref{ass:F}, respectively, and the parameters $r,s$ satisfy the condition \eqref{condi_rs}.
The conditions for the parameters $R,T,M,M'$ that appear in the proof are specified here. 
Let us take $R,M,M'>0$ and $T\in (0,1]$ such that 
\begin{equation}\label{condi:para}
    \begin{dcases}
        C_{\mathcal L} |D|^\frac{1}{s}  T^{\frac{1}{r}} R + 
        (\alpha\beta^{-1} +1)^{-\beta} C_{\mathcal{L}} C_F
T^{\alpha +\beta}
M^{p} \le M,\\
2C_{\mathcal{L}} R 
+
2C_{\mathcal{L}}T (1 + C_F M'^p) \leq M',\\
T^\frac{1}{r} |D|^\frac{1}{s} M' \le M,
    \end{dcases}
\end{equation}
where $\alpha$ and $\beta$ are the constants defined in \eqref{def:al-be}. 
The first condition in \eqref{condi:para} ensures the LWP for \eqref{eq.integral} (Propositions \ref{prop:LWP} and \ref{prop:picard}). See Proposition \ref{prop:fixed-point} and \eqref{condi-subcri} in Appendix \ref{app:B} for more details. 
The second condition ensures that the approximate mappings $\Phi_{N_k}$ and $\Phi_{N_k, net}$ in the proof map from $B_{L^\infty(0,T; L^\infty)}(M')$ into itself.
The third condition ensures $B_{L^{\infty}(0,T ; L^{\infty})}(M') \subset B_{L^{r}(0,T ; L^{s})}(M)$.
We note to take the parameters in the following order: For any $R,M>0$, we can take a sufficiently large $M'$ and a sufficiently small $T$ so that \eqref{condi:para} holds.\\

To begin with, we prepare the following lemma.

\begin{lemma}
\label{lem:bound-G}
Under Assumption \ref{ass:L1}, we have
\begin{align}
&
\left\| \left\| G(t-\tau, x, y)\right\|_{L^{1}_\tau(0,t ; L^{1}_y)} \right\|_{L^{\infty}_t(0,T ; L^{\infty}_x)} \leq C_{\mathcal{L}}T, 
\nonumber
\\
&
\left\| \left\| G(t, x, y) \right\|_{L^{1}_y} \right\|_{L^{\infty}_t(0,T ; L^{\infty}_x)}  \leq  C_{\mathcal{L}},
\nonumber
\end{align}
for any $T \in (0,1]$.
\end{lemma}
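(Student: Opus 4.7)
The plan is to deduce both estimates from Assumption~\ref{ass:L1} applied with $q_1=q_2=\infty$, which gives
\[
\|S_{\mathcal{L}}(t)\|_{L^{\infty}\to L^{\infty}} \le C_{\mathcal{L}}
\qquad \text{for every } t\in(0,1].
\]
The key observation I would use is the standard duality identification of the operator norm of an integral operator from $L^{\infty}$ to $L^{\infty}$ with the essential supremum (over $x$) of the $L^{1}_{y}$-norm of its kernel: since $S_{\mathcal{L}}(t)u_0(x)=\int_D G(t,x,y)u_0(y)\,dy$, testing against $u_0=\operatorname{sgn} G(t,x,\cdot)$ (or an $L^{\infty}$ approximation thereof) shows
\[
\operatorname*{ess\,sup}_{x\in D}\|G(t,x,\cdot)\|_{L^{1}_{y}}
=\|S_{\mathcal{L}}(t)\|_{L^{\infty}\to L^{\infty}}
\le C_{\mathcal{L}},\qquad t\in(0,T],
\]
since $T\le 1$.

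For the second inequality, this immediately yields
\[
\left\| \|G(t,x,y)\|_{L^{1}_{y}}\right\|_{L^{\infty}_{t}(0,T;L^{\infty}_{x})}
=\operatorname*{ess\,sup}_{t\in(0,T)}\operatorname*{ess\,sup}_{x\in D} \|G(t,x,\cdot)\|_{L^{1}_{y}}
\le C_{\mathcal{L}}.
\]
For the first inequality, I would fix $t\in(0,T]$ and $x\in D$ and apply the same bound with $t$ replaced by $t-\tau\in(0,t)\subset(0,T]\subset(0,1]$, so that $\|G(t-\tau,x,\cdot)\|_{L^{1}_{y}}\le C_{\mathcal{L}}$ for a.e.\ $\tau\in(0,t)$. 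Integrating in $\tau$ gives
\[
\|G(t-\tau,x,y)\|_{L^{1}_{\tau}(0,t;L^{1}_{y})}
=\int_{0}^{t}\|G(t-\tau,x,\cdot)\|_{L^{1}_{y}}\,d\tau
\le C_{\mathcal{L}}\, t \le C_{\mathcal{L}}\, T,
\]
and taking the essential supremum in $(t,x)\in(0,T)\times D$ finishes the proof.

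There is essentially no obstacle here beyond making the kernel/operator-norm duality rigorous (which only requires measurability of $G$ and $\sigma$-finiteness of $D$, both of which are in force since $D\subset\mathbb{R}^{d}$ is bounded). The proof is completely non-quantitative in $\tau$, since the time decay exponent in Assumption~\ref{ass:L1} vanishes when $q_1=q_2$, and this is exactly why the $T$-factor appears only through the outer $\tau$-integration in the first estimate.
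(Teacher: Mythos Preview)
Your proof is correct and follows essentially the same route as the paper: both use Assumption~\ref{ass:L1} with $q_1=q_2=\infty$ together with the identification $\operatorname*{ess\,sup}_{x}\|G(t,x,\cdot)\|_{L^{1}_{y}}=\|S_{\mathcal{L}}(t)\|_{L^{\infty}\to L^{\infty}}$, and then integrate in $\tau$ for the first estimate. The only cosmetic difference is that the paper cites an external lemma for the kernel/operator-norm identity, whereas you sketch the standard duality argument directly.
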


\begin{proof}
From Assumption \ref{ass:L1} and \cite[Lemma B.1]{iwabuchi2021bilinear}, we see that 
\begin{align}
&
\left\| G(t-\tau, x, y)\right\|_{L^{\infty}_x L^{1}_y}
= 
\|S_{\mathcal{L}}(t-\tau)\|_{L^{\infty} \to L^{\infty}}
\leq 
C_{\mathcal{L}},
\nonumber
\end{align}
which implies that
\[
\begin{split}
    \left\| \left\| G(t-\tau, x, y)\right\|_{L^{1}_\tau(0,t ; L^{1}_y)} \right\|_{L^{\infty}_t(0,T ; L^{\infty}_x)} \leq C_{\mathcal{L}}T.
\end{split}
\]
Similarly, as 
\begin{align}
&
\left\| G(t, x, y)\right\|_{L^{\infty}_x L^{1}_y}
= 
\|S_{\mathcal{L}}(t)\|_{L^{\infty} \to L^{\infty}}
\leq 
C_{\mathcal{L}},
\nonumber
\end{align}
we also have
\[
\left\| \left\| G(t, x, y)\right\|_{L^{1}_y(D) }\right\|_{L^{\infty}_t(0,T ; L^{\infty}_x)} 
\leq C_{\mathcal{L}}.
\]  
The proof of Lemma \ref{lem:bound-G} is finished.
\end{proof}

We divide the proof of Theorem \ref{thm:quantitative-UAT-heat} into four steps. 

\medskip

\noindent{\bf Step 1.}
We define $\Phi_{N}$ by
\begin{equation*}
\Phi_{N}[u](t,x)
:= 
\int_{D}G_{N}(t,x,y)u_0(y)dy
+
\int_{0}^{t}\int_{D}G_{N}(t-\tau,x,y)F(u(\tau,y))d\tau dy.
\end{equation*}
Since $C_{G}(N), C'_{G}(N) \to 0$ as $N \to \infty$ with $r',s'\ge 1$ and $D \subset \mathbb{R}^d$ is bounded, we see that 
\begin{align*}
&
\left\| \left\| G(t-\tau, x, y) - G_N(t-\tau, x, y) \right\|_{L^{1}_{\tau}(0,t ; L^{1}_y)} \right\|_{L^{r}_t(0,T ; L^{s}_x)} \to 0,
\\
&
\left\| \left\| G(t, x, y) - G_N(t, x, y) \right\|_{L^{1}_y} \right\|_{L^{r}_t(0,T ; L^{s}_x)} \to 0,
\end{align*}
as $N \to \infty$, and hence, 
there exists a subsequence 
$\{N_k\}_{k \in \mathbb{N}} \subset \mathbb{N}$ such that 
\begin{align*}
&
\left\| G_{N_k}(t-\tau, x, y)\right\|_{L^{1}_{\tau}(0,t ; L^{1}_y)} \to \left\| G(t-\tau, x, y) \right\|_{L^{1}_{\tau}(0,t ; L^{1}_y)}, 
\\
&
\left\| G_{N_k}(t, x, y) \right\|_{L^{1}_y} \to  \left\|G(t, x, y) \right\|_{L^{1}_y},
\end{align*}
for almost everywhere $t \in (0,T)$ and $x\in D$ as 
$k \to \infty$.
Then, for any $\epsilon>0$, 
there exists $k_\epsilon\in \mathbb N$ such that for any $k\ge k_\epsilon$,  
\begin{align}
&
\left\| \left\| G_{N_k}(t-\tau, x, y)\right\|_{L^{1}_{\tau}(0,t ; L^{1}_y)} \right\|_{L^{\infty}_t(0,T ; L^{\infty}_x)} \leq 2 C_{\mathcal{L}}T,
\nonumber
\\
&
\left\| \left\|G_{N_k}(t, x, y) \right\|_{L^{1}_y} \right\|_{L^{\infty}_t(0,T ; L^{\infty}_x)}  \leq 2 C_{\mathcal{L}},
\nonumber
\\
&
C_{G}(N_k), C'_{G}(N_k) \leq \epsilon,\nonumber
\end{align}
where the first and second inequalities follow from Lemma~\ref{lem:bound-G} and 
the last one follows from Assumption \ref{ass:L2}.

\begin{lemma}
\label{lem:map-N}
For any $u_0 \in B_{L^\infty}(R)$ and $k\ge k_\epsilon$, the following statements hold:
\begin{enumerate}[\rm (i)]
    \item $\Phi_{N_k}$ is a map from $B_{L^\infty(0,T; L^\infty)}(M')$ to itself.

    \item There exists a constant $C_1>0$ such that
\[
\|\Phi[u]-\Phi_{N_k}[u]\|_{L^r(0,T; L^s)} 
\le
C_1 \epsilon,
\]
for $u \in B_{L^\infty(0,T; L^\infty)}(M')$.
Here, $C_1 = |D|^\frac{1}{s} R + C_FM^p$.
\end{enumerate}

\end{lemma}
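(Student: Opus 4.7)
\textbf{Proof plan for Lemma~\ref{lem:map-N}.} The two assertions are essentially quantitative statements about how well $\Phi_{N_k}$ mimics the true operator $\Phi$, so the strategy is the same in both parts: split into an initial-data piece and a Duhamel piece, dualise the $y$-integration (and the $\tau$-integration in the second piece) via H\"older's inequality, and then feed in the bounds already recorded just before the lemma.

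For part (i), I would first write
\begin{equation*}
|\Phi_{N_k}[u](t,x)|
\le
\int_{D}|G_{N_k}(t,x,y)||u_0(y)|\,dy
+\int_{0}^{t}\!\!\int_{D}|G_{N_k}(t-\tau,x,y)||F(u(\tau,y))|\,dyd\tau.
\end{equation*}
Applying the $(L^{\infty}_y, L^{1}_y)$ H\"older pairing and the two auxiliary bounds $\|\,\|G_{N_k}(t,\cdot,\cdot)\|_{L^{1}_y}\,\|_{L^{\infty}_t L^{\infty}_x}\le 2C_{\mathcal{L}}$ and $\|\,\|G_{N_k}(t-\tau,\cdot,\cdot)\|_{L^{1}_{\tau}L^{1}_y}\,\|_{L^{\infty}_t L^{\infty}_x}\le 2C_{\mathcal{L}}T$ (valid for $k\ge k_\epsilon$ by the subsequence argument), one gets
\begin{equation*}
\|\Phi_{N_k}[u]\|_{L^{\infty}(0,T;L^{\infty})}
\le 2C_{\mathcal{L}}\|u_0\|_{L^{\infty}}+2C_{\mathcal{L}}T\,\|F(u)\|_{L^{\infty}(0,T;L^{\infty})}.
\end{equation*}
Using Assumption~\ref{ass:F} with $F(0)=0$ gives $|F(z)|\le C_F|z|^{p}$, so $\|F(u)\|_{\infty}\le C_F(M')^p$, and the second line of \eqref{condi:para} closes the estimate at $M'$.

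For part (ii), I would write
\begin{equation*}
\Phi[u]-\Phi_{N_k}[u]
=\int_{D}\!\!\bigl(G-G_{N_k}\bigr)(t,x,y)u_0(y)\,dy
+\int_{0}^{t}\!\!\int_{D}\!\!\bigl(G-G_{N_k}\bigr)(t-\tau,x,y)F(u(\tau,y))\,dyd\tau,
\end{equation*}
and handle the two terms separately. For the first, H\"older in $y$ with exponents $(s,s')$ followed by taking the $L^{r}_tL^{s}_x$ norm gives
\begin{equation*}
\Bigl\|\int_D(G-G_{N_k})u_0\,dy\Bigr\|_{L^{r}_tL^{s}_x}
\le
C'_G(N_k)\,\|u_0\|_{L^{s}}
\le
\epsilon\,|D|^{1/s}R.
\end{equation*}
For the second, I would apply H\"older twice (first in $y$ with $(s,s')$, then in $\tau$ with $(r,r')$) so that the factor $\|G-G_{N_k}\|_{L^{r'}_\tau L^{s'}_y}$ appears, which upon taking $L^{r}_tL^{s}_x$ becomes $C_G(N_k)\le\epsilon$; the remaining factor $\|F(u)\|_{L^{r}(0,T;L^{s})}$ is bounded using the pointwise inequality $|F(u)|\le C_F|u|^{p}$ together with the inclusion $B_{L^\infty(0,T;L^\infty)}(M')\subset B_{L^{r}(0,T;L^{s})}(M)$ guaranteed by the third line of \eqref{condi:para}, yielding a bound of order $C_F M^{p}$. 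Summing these two contributions produces the claimed constant $C_1=|D|^{1/s}R+C_FM^{p}$.

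The only mildly subtle point, hence the main thing to be careful about, is the careful matching of the H\"older exponents between the Duhamel integral and the quantities $C_G(N_k)$, $C'_G(N_k)$ defined in Assumption~\ref{ass:L2}: one must dualise exactly in $(\tau,y)$ so that the residual norm on $F(u)$ is the natural $L^{r}(0,T;L^{s})$-norm of the solution space, and likewise for the initial-data piece. Once this pairing is set up correctly, both assertions reduce to applications of bounds already established above.
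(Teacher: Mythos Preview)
Your proposal is correct and follows essentially the same route as the paper: both parts split into an initial-data piece and a Duhamel piece, apply H\"older in the $y$ (and, for the Duhamel term, $\tau$) variables against the $G_{N_k}$ or $G-G_{N_k}$ factors, and then invoke the auxiliary bounds $2C_{\mathcal L}$, $2C_{\mathcal L}T$, $C_G(N_k)\le\epsilon$, $C'_G(N_k)\le\epsilon$ together with the conditions \eqref{condi:para} exactly as you describe. The constants, the pairing of exponents, and the use of the inclusion $B_{L^\infty(0,T;L^\infty)}(M')\subset B_{L^r(0,T;L^s)}(M)$ all match the paper's argument.
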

\begin{proof}
Let $u_0 \in B_{L^\infty}(R)$ and $k\ge k_\epsilon$. 
We first prove (i). 
By H\"{o}lder's inequality and the second condition in \eqref{condi:para},  
for any $u \in B_{L^\infty(0,T; L^\infty)}(M')$, we estimate
\begin{align*}
    \|\Phi_{N_k}[u]\|_{L^{\infty}_t(0,T ; L^{\infty}_x)}
    & \le  \left\|\|G_{N_k}(t,x,y)\|_{L^{1}_y}\right\|_{L_t^{\infty}(0,T;L_x^{\infty})} \|u_0\|_{L^{\infty}}\\
    & \qquad +
    \left\|\|G_{N_k}(t-\tau ,x,y)\|_{L^{1}_{\tau}(0,t;L^{1}_y)}\right\|_{L_t^{\infty}(0,T;L_x^{\infty})} \|F(u)\|_{L^{\infty}(0,T;L^{\infty})}\\
    & \le  
    2C_{\mathcal{L}} R 
    +
    2C_{\mathcal{L}}T 
    C_F M'^p \leq M'.
\end{align*}

Next, we prove (ii). By H\"{o}lder's inequality and the third condition in \eqref{condi:para}, for 
any $u \in B_{L^\infty(0,T; L^\infty)}(M')$, we have 
\begin{align*}
\Bigl|
\Phi[u](t,x)-\Phi_{N_k}[u](t,x)
\Bigr|
&\leq 
\|G(t,x,y) - G_{N_k}(t,x,y)\|_{L^{s'}_y} \|u_0\|_{L^{s}}
\\
&
~~ + 
\|G(t-\tau,x,y) - G_{N_k}(t-\tau ,x,y)\|_{L^{r'}_{\tau}(0,t;L^{s'}_y)} \|F(u)\|_{L^{r}(0,T;L^{s})},
\end{align*}
which implies that 
\begin{align*}
&
\left\|
\Phi[u]-\Phi_{N_k}[u]
\right\|_{L^r(0,T; L^s)}
\\
&
\leq 
\left\|\|G(t,x,y) - G_N(t,x,y)\|_{L^{s'}_y} \right\|_{L^r_t(0,T; L^s_x)} \|u_0\|_{L^{s}}
\\
&
\quad + 
\left\|\|G(t-\tau,x,y) - G_N(t-\tau ,x,y)\|_{L^{r'}_{\tau}(0,t;L^{s'}_y)}
\right\|_{L^r_t(0,T; L^s_x)} 
\|F(u)\|_{L^{r}(0,T;L^{s})}
\\
&
\leq 
C'_{G}(N_k)\|u_0\|_{L^{s}} + C_{G}(N_k)\|F(u)\|_{L^{r}(0,T;L^{s})}
\\
&
\leq 
\epsilon |D|^\frac{1}{s} \|u_0\|_{L^{\infty}} +  \epsilon \|F(u)\|_{L^{r}(0,T;L^{s})} \\
& \le \left(
|D|^\frac{1}{s} R + C_FM^p\right) \epsilon \\
& = C_1 \epsilon,
\end{align*}
where we note that $B_{L^{\infty}(0,T ; L^{\infty})}(M') \subset B_{L^{r}(0,T ; L^{s})}(M)$.
Therefore, (ii) is proved. Thus, the proof of Lemma \ref{lem:map-N} is finished.
\end{proof}

\noindent{\bf Step 2.} 
We define the map $\Phi_{N_k, net}$ by 
\begin{equation}
\label{eq:Phi-N-net}
\Phi_{N_k, net}[u](t,x)
:=
\int_{D}G_{N_k}(t,x,y)u_0(y)dy
+
\int_{0}^{t} \int_{D}G_{N_k}(t-\tau,x,y)F_{net}(u(\tau,y))d\tau dy, 
\end{equation}
where $F_{net}:\mathbb{R} \to \mathbb{R}$ is a ReLU neural network. 
Here, as $F|_{(-M',M')} \in W^{1, \infty}(-M',M')$ from Assumption \ref{ass:F}, 
we can apply the approximation result by 
\cite[Theorem 4.1]{guhring2020error} to see that 
there exists 
a ReLU neural network $F_{net}:\mathbb{R} \to \mathbb{R}$
such that 
\begin{equation}
\label{eq:approx-NN-nonlinearity}
\| F - F_{net} \|_{L^{\infty}(-M, M)} \leq \epsilon,
\end{equation}
%
%
where the depth $L=L(F_{net})$ and the number $H=H(F_{net})$ of neurons are estimated as
\begin{equation}\label{eq:quanti-esti-nonlinearity}
\begin{cases}
L(F_{net}) \leq C \log_2 \left(\epsilon^{- 1} \right), \\
H(F_{net}) \leq C \epsilon^{-1} \log_2 \left(\epsilon^{-1} \right),
\end{cases}
\end{equation}
where $C = C(M', F)>0$ is a constant depending on $M', F$. 

\begin{lemma}
\label{lem:map-N-net}
For any $u_0 \in B_{L^\infty}(R)$ and $k\ge k_\epsilon$, the following statements hold:
\begin{enumerate}[\rm (i)]
\item 
$\Phi_{N_k, net}$ is a map from $B_{L^\infty(0,T; L^\infty)}(M')$ to itself.
\item[\rm (ii)] There exists a constant $C_2>0$ such that 
\[
\|\Phi_{N_k}[u]-\Phi_{N_k, net}[u]\|_{L^r(0,T; L^s)} \leq C_2 \epsilon.
\] 
for $u \in B_{L^\infty(0,T; L^\infty)}(M')$. Here, $C_2 = 2 C_{\mathcal{L}} |D|^\frac{1}{s} T^{1+\frac{1}{r}}$.
\end{enumerate}
\end{lemma}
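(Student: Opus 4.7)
The plan is to establish Lemma \ref{lem:map-N-net} by mirroring the structure of Lemma \ref{lem:map-N}, with the only change being the replacement of $F$ by its ReLU neural network approximation $F_{net}$. The restriction to the $L^\infty$-ball $B_{L^\infty(0,T;L^\infty)}(M')$ is precisely what allows us to transfer the pointwise bound \eqref{eq:approx-NN-nonlinearity} into a control on the nonlinear Duhamel term of $\Phi_{N_k,net}$: any $u$ in that ball satisfies $|u(\tau,y)|\le M'$ almost everywhere, so $F_{net}(u(\tau,y))$ lands in the interval on which the approximation error is bounded by $\epsilon$.

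For part (i), I would estimate $\|\Phi_{N_k,net}[u]\|_{L^\infty(0,T;L^\infty)}$ by applying H\"older's inequality in $y$ and $\tau$, exactly as in the proof of Lemma~\ref{lem:map-N}(i). Using the uniform bounds $\|G_{N_k}(t,x,y)\|_{L^1_y}\le 2C_{\mathcal L}$ and $\|G_{N_k}(t-\tau,x,y)\|_{L^1_\tau L^1_y}\le 2C_{\mathcal L}T$ from Step 1, and the pointwise estimate
\[
|F_{net}(z)|\le |F(z)|+|F(z)-F_{net}(z)|\le C_F|z|^p+\epsilon\le C_FM'^{p}+1\quad\text{for }|z|\le M',
\]
where $|F(z)|\le C_F|z|^p$ follows from Assumption~\ref{ass:F} by taking $z_2=0$ and $\epsilon<1$ is used, I would obtain
\[
\|\Phi_{N_k,net}[u]\|_{L^\infty(0,T;L^\infty)}\le 2C_{\mathcal L}R+2C_{\mathcal L}T(1+C_FM'^{p})\le M',
\]
where the last inequality is exactly the second condition in \eqref{condi:para}. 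This closes (i).

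For part (ii), the key observation is that the linear term cancels:
\[
\Phi_{N_k}[u](t,x)-\Phi_{N_k,net}[u](t,x)=\int_0^t\!\!\int_D G_{N_k}(t-\tau,x,y)\bigl[F(u(\tau,y))-F_{net}(u(\tau,y))\bigr]\,dy\,d\tau.
\]
Applying H\"older's inequality in $(\tau,y)$ with the pair $(L^1,L^\infty)$ and using \eqref{eq:approx-NN-nonlinearity} together with the bound on $\|G_{N_k}\|_{L^1_\tau L^1_y}$ gives
\[
|\Phi_{N_k}[u](t,x)-\Phi_{N_k,net}[u](t,x)|\le 2C_{\mathcal L}T\,\epsilon.
\]
Taking the $L^r(0,T;L^s)$ norm and invoking the trivial embedding $\|f\|_{L^r(0,T;L^s)}\le T^{1/r}|D|^{1/s}\|f\|_{L^\infty(0,T;L^\infty)}$ (valid since $D$ is bounded and $T<\infty$) yields the claimed bound with $C_2=2C_{\mathcal L}|D|^{1/s}T^{1+1/r}$.

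The computation is routine once the working ball $B_{L^\infty(0,T;L^\infty)}(M')$ is fixed. The only genuinely non-trivial point is the design choice already made in Step 1, namely to control $\Phi_{N_k}$ in the $L^\infty$-ball rather than in the larger $L^rL^s$-ball; without the $L^\infty$ constraint on $u$, the pointwise approximation of $F$ by $F_{net}$ on a bounded interval could not be invoked, and one would be forced into a much more delicate argument involving growth control of $F_{net}$ outside $(-M',M')$.
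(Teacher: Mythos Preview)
Your proposal is correct and follows essentially the same route as the paper: for (i) you bound $|F_{net}(z)|$ by $C_FM'^p+1$ via the triangle inequality and Assumption~\ref{ass:F}, then invoke the second condition in \eqref{condi:para}; for (ii) you observe the linear terms cancel, apply the $(L^1,L^\infty)$ H\"older pairing with the Step~1 bound on $\|G_{N_k}\|_{L^1_\tau L^1_y}$, and finish with the trivial embedding $L^\infty\hookrightarrow L^rL^s$. Your closing remark about why the $L^\infty$-ball is the right working space is a nice observation that the paper leaves implicit.
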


\begin{proof}
The proof follows the same lines with Lemma \ref{lem:map-N}.
Let $u_0 \in B_{L^\infty}(R)$ and $k\ge k_\epsilon$. 
By H\"{o}lder's inequality and Assumption \ref{ass:F},
for any $u \in B_{L^\infty(0,T; L^\infty)}(M')$, we estimate
\begin{align*}
\|
\Phi_{N_k, net}[u]
\|_{L^\infty(0,T; L^\infty)}
& \leq 
\left\|\|G_{N_k}(t,x,y)\|_{L^{1}_y}\right\|_{L^{\infty}_{t}(0,t;L^{\infty}_x)} \|u_0\|_{L^{\infty}}
\\
&
~~ + 
\left\|\|G_{N_k}(t-\tau ,x,y)\|_{L^{1}_{\tau}(0,t;L^{1}_y)}\right\|_{L^{\infty}_{t}(0,t;L^{\infty}_x)} \|F_{net}(u)\|_{L^{\infty}(0,T;L^{\infty})}
\\
&
\leq 
2C_{\mathcal{L}} R 
+
2C_{\mathcal{L}}T (1 + C_F M'^p) \leq M',
\end{align*}
where we used the inequality
\begin{align*}
\|F_{net}(u)\|_{L^{\infty}(0,T;L^{\infty})}
&
\leq 
\|F(u)-F_{net}(u)\|_{L^{\infty}(0,T;L^{\infty})}
+
\|F(u)\|_{L^{\infty}(0,T;L^{\infty})}
\\
&
\leq 
\epsilon
+
C_FM'^p
\leq 1 + C_F M'^p.
\end{align*}
Hence, (i) is proved. 

Next, we prove (ii). 
By H\"{o}lder's inequality,
for any $u \in B_{L^\infty(0,T; L^\infty)}(M')$,
we estimate
\[
\begin{split}
    &\Bigl|
\Phi_{N_k}[u](t,x)-\Phi_{N_k, net}[u](t,x)
\Bigr|\\ & \le 
\|G_{N_k}(t-\tau ,x,y)\|_{L^{1}_{\tau}(0,t;L^{1}_y)} \|F(u) - F_{net}(u)\|_{L^{\infty}(0,T;L^{\infty})}\\
& \le 
2 C_{\mathcal{L}} T \|F(u) - F_{net}(u)\|_{L^{\infty}(0,T;L^{\infty})}
\le 2 C_{\mathcal{L}} T \epsilon.
\end{split}
\]
Hence, we obtain
\begin{equation*}
\begin{split}
&\|\Phi_{N_k}[u]-\Phi_{N_k, net}[u]\|_{L^r(0,T; L^s)}
\leq 
2 C_{\mathcal{L}} |D|^\frac{1}{s} T^{1+\frac{1}{r}}\epsilon = C_2 \epsilon.
\end{split}
\end{equation*}
Therefore, (ii) is proved. Thus, the proof of Lemma \ref{lem:map-N-net} is finished.
\end{proof}

\noindent{\bf Step 3.} 
We define $\Gamma : B_{L^{\infty}}(R) \to B_{L^{r}(0,T ; L^{s})}(M)$ by 
\begin{equation}\label{def:Gamma}
\Gamma(u_0):= \Phi_{N_k, net}^{[J]}[0]
\quad \text{for }u_0 \in B_{L^{\infty}}(R).
\end{equation}

\begin{lemma}
\label{lem:step-2}
Let $J =\lceil \frac{\log (1/\epsilon)}{\log (1/\delta)} \rceil \in \mathbb{N}$. 
Then there exists a constant $C_3>0$ 
such that for any $u_0 \in B_{L^\infty}(R)$
\begin{equation*}
    \| \Gamma^{+}(u_0) - \Gamma(u_0) \|_{L^{r}(0,T ; L^{s})}
    \leq C_3 \epsilon,
\end{equation*}
where $C_3 = M+ \frac{C_1 + C_2}{1-\delta}$.
\end{lemma}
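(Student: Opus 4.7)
The plan is to directly compare the exact solution $u = \Gamma^{+}(u_0)$ (the fixed point of $\Phi$ in $X = B_{L^r(0,T;L^s)}(M)$) with the iterate $v_J := \Phi_{N_k, net}^{[J]}[0] = \Gamma(u_0)$ by a one-step-at-a-time induction that exploits the contractivity of $\Phi$ together with the two approximation lemmas. The crucial identity I will use is $u = \Phi[u]$, which lets me insert one copy of $\Phi$ at each step.

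First, I will verify that all quantities live in the right spaces so the relevant estimates are available: set $v_j := \Phi_{N_k, net}^{[j]}[0]$ for $0\le j\le J$. By Lemma \ref{lem:map-N-net}(i) and $0 \in B_{L^{\infty}(0,T;L^\infty)}(M')$, each $v_j \in B_{L^{\infty}(0,T;L^\infty)}(M')$, and by the third condition in \eqref{condi:para} this inclusion lands in $X$, so both $\Phi[v_j]$ and $\Phi_{N_k}[v_j]$ and $\Phi_{N_k,net}[v_j]$ are controlled and the estimates of Lemmas \ref{lem:map-N}(ii) and \ref{lem:map-N-net}(ii) may be invoked at each iterate.

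Next, I will derive the recursion. Writing $u - v_j = \Phi[u] - \Phi_{N_k,net}[v_{j-1}]$ and inserting $\pm\, \Phi[v_{j-1}]$,
\begin{align*}
\|u - v_j\|_{L^r(0,T;L^s)}
&\le \|\Phi[u] - \Phi[v_{j-1}]\|_{L^r(0,T;L^s)} + \|\Phi[v_{j-1}] - \Phi_{N_k,net}[v_{j-1}]\|_{L^r(0,T;L^s)} \\
&\le \delta \|u - v_{j-1}\|_{L^r(0,T;L^s)} + (C_1 + C_2)\,\epsilon,
\end{align*}
where the contraction estimate uses $u, v_{j-1} \in X$, and the second-term bound uses a further triangle inequality together with Lemmas \ref{lem:map-N}(ii) and \ref{lem:map-N-net}(ii). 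Iterating this recursion from $v_0 = 0$ and summing the geometric series gives
\[
\|u - v_J\|_{L^r(0,T;L^s)} \le \delta^J \|u\|_{L^r(0,T;L^s)} + (C_1+C_2)\,\epsilon \sum_{j=0}^{J-1}\delta^j \le M\,\delta^J + \frac{C_1+C_2}{1-\delta}\,\epsilon,
\]
using $\|u\|_{L^r(0,T;L^s)} \le M$ since $u \in X$.

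Finally, the choice $J = \lceil \log(1/\epsilon)/\log(1/\delta) \rceil$ yields $\delta^J \le \epsilon$ because $\delta \in (0,1)$, so
\[
\|\Gamma^{+}(u_0) - \Gamma(u_0)\|_{L^r(0,T;L^s)} \le \left(M + \frac{C_1+C_2}{1-\delta}\right)\epsilon = C_3\,\epsilon,
\]
uniformly in $u_0 \in B_{L^\infty}(R)$. The only delicate point is the bookkeeping that each $v_j$ remains in the ball $B_{L^\infty(0,T;L^\infty)}(M') \subset X$ so that Lemmas \ref{lem:map-N} and \ref{lem:map-N-net} apply at every step; this is exactly what the conditions \eqref{condi:para} were designed for, so the argument goes through cleanly once those inclusions are noted.
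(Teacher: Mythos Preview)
Your proof is correct and uses essentially the same ingredients as the paper: the $\delta$-contractivity of $\Phi$ on $X$, the membership $v_j \in B_{L^\infty(0,T;L^\infty)}(M') \subset X$ from \eqref{condi:para}, and the two approximation bounds from Lemmas~\ref{lem:map-N}(ii) and~\ref{lem:map-N-net}(ii). The only cosmetic difference is organizational: the paper first splits $\|u-v_J\| \le \|u-\Phi^{[J]}[0]\| + \|\Phi^{[J]}[0]-v_J\|$ and handles the second term by a telescoping sum $\sum_j \delta^{J-j}\|\Phi[v_{j-1}]-\Phi_{N_k,net}[v_{j-1}]\|$, whereas you fold both contributions into a single recursion $\|u-v_j\| \le \delta\|u-v_{j-1}\| + (C_1+C_2)\epsilon$ and iterate; unrolling your recursion reproduces the paper's two terms exactly.
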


\begin{proof}
By the triangle inequality, we have 
\[
\|\Gamma^{+}(u_0) - \Gamma(u_0) \|_{L^{r}(0,T ; L^{s})} 
\leq 
\|\Gamma^{+}(u_0) - \Phi^{[J]}[0] \|_{L^{r}(0,T ; L^{s})} 
+
\|\Phi^{[J]}[0] - \Gamma(u_0)\|_{L^{r}(0,T ; L^{s})} .
\]
As to the first term, 
since $\Phi:B_{L^{r}(0,T ; L^{s})}(M) \to B_{L^{r}(0,T ; L^{s})}(M)$ is $\delta$-contractive and $u=\Gamma^+(u_0)$ is the fixed point of $\Phi$ from Proposition \ref{prop:picard}, we have 
\begin{align}
\|\Gamma^{+}(u_0) - \Phi^{[J]}[0] \|_{L^{r}(0,T ; L^{s})} 
&
= 
\|\Phi^{[J]}[u]-\Phi^{[J]}[0]\|_{L^{r}(0,T ; L^{s})} 
\nonumber
\\
&
\leq  
\delta^{J}\|u-0\|_{L^{r}(0,T ; L^{s})}
\leq 
M 
\delta^{J}
\leq M \epsilon.
\label{eq:est-net-N-1}
\end{align} 
As to the second term, we see that 
\begin{align}
&
\|\Phi^{[J]}[0] - \Gamma(u_0) \|_{L^{r}(0,T ; L^{s})} 
= 
\|\Phi^{[J]}[0] - \Phi_{N_k, net}^{[J]}[0]\|_{L^{r}(0,T ; L^{s})}
\nonumber
\\ 
& 
\le \sum_{j=1}^J 
\left\|
\left(\Phi^{[J-j+1]} \circ \Phi_{N_k, net}^{[j-1]}\right)[0] - \left(\Phi^{[J-j]} \circ \Phi_{N_k, net}^{[j]}\right)[0]
\right\|_{L^{r}(0,T ; L^{s})}
\\
&
\le \sum_{j=1}^J 
\delta^{J-j} 
\left\|
\left(\Phi \circ \Phi_{N_k, net}^{[j-1]}\right)[0] - \Phi_{N_k, net}^{[j]}[0]
\right\|_{L^{r}(0,T ; L^{s})}\\
&
=
\sum_{j=1}^{J} 
\delta^{J-j} 
\|\Phi[u_{j-1, N}] - \Phi_{N_k, net}[u_{j-1, N}]  \|_{L^{r}(0,T ; L^{s})},
\label{eq:est-net-N-2}
\end{align}
where  $u_{j,N}:= \Phi_{N_k, net}^{[j]}[0]$ and $u_{j-1,N} \in B_{L^{\infty}(0,T ; L^{\infty})}(M') \subset B_{L^{r}(0,T ; L^{s})}(M)$ by the third condition in \eqref{condi:para}.
By Lemma~\ref{lem:map-N} and Lemma~\ref{lem:map-N-net}, we see that 
\begin{align*}
&
\|\Phi[u_{j, N}] - \Phi_{N_k, net}[u_{j, N}]  \|_{L^{r}(0,T ; L^{s})},
\\
&
\leq 
\|\Phi[u_{j-1, N}] - \Phi_{N_k}[u_{j-1, N}]  \|_{L^{r}(0,T ; L^{s})}
+
\|\Phi_{N_k}[u_{j-1, N}] - \Phi_{N_k, net}[u_{j-1, N}]  \|_{L^{r}(0,T ; L^{s})}
\\
&
\leq 
(C_1 + C_2)\epsilon.
\end{align*}
Hence,
\begin{align}
\label{eq:est-net-N-3}
\|\Phi^{[J]}[0]  - \Gamma(u_0) \|_{L^{r}(0,T ; L^{s})} 
 \leq 
\sum_{j=1}^{J} \delta^{J-j}
(C_1 + C_2)\epsilon
\leq 
\sum_{j=0}^{\infty} \delta^{j}
(C_1 + C_2)\epsilon
 =
\frac{C_1+C_2}{1-\delta}
\epsilon.
\end{align}
Therefore, by (\ref{eq:est-net-N-1}) and (\ref{eq:est-net-N-3}), we conclude that 
\begin{align*}
\|\Gamma^{+}(u_0) - \Gamma(u_0) \|_{L^{r}(0,T ; L^{s})} 
&
\leq 
\|\Gamma^{+}(u_0) - \Phi^{[J]}[0] \|_{L^{r}(0,T ; L^{s})} 
+
\|\Phi^{[J]}[0] -  \Gamma(u_0)\|_{L^{r}(0,T ; L^{s})} 
\\
&
\leq 
\left\{
M+
\frac{C_1 + C_2}{1-\delta}
\right\}
\epsilon = C_3 \epsilon.
\end{align*}
Thus, the proof of Lemma \ref{lem:step-2} is finished.
\end{proof}

\noindent{\bf Final step.} 
Finally, it is sufficient to represent the approximate operator $\Gamma$ given in \eqref{def:Gamma} as a neural operator in the form of Definition \ref{def:neural-operator} and to provide its quantitative estimates. 
\begin{lemma}
\label{lem:representation-NO}
$$
\Gamma \in \mathcal{NO}^{L, H, ReLU}_{N, \varphi, \psi},
$$ where $L=L(\Gamma)$, $H=H(\Gamma)$ satisfies
\begin{equation*}
L(\Gamma) \leq C (\log (\epsilon^{-1} ))^2
    \text{~~and~~}
H(\Gamma) \leq C \epsilon^{-1} (\log (\epsilon^{-1} ))^2,
\end{equation*}
where $C>0$ is a constant depending on $\nu, M, F, D, p, d, r, s, R, \mathcal{L}$, and $N=N(\Gamma)$ satisfies
\begin{equation*}
    C_{G}(N) \leq \epsilon 
    \text{~~and~~}
    C'_{G}(N) \leq \epsilon.
\end{equation*}
\end{lemma}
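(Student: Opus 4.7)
The plan is to realize $\Gamma(u_0)=\Phi_{N_{k},net}^{[J]}[0]$ explicitly as a neural operator of the form prescribed in Definition~\ref{def:neural-operator}. First, I would set $N:=N_{k}$, where $k\ge k_{\epsilon}$ is the index chosen in Step~1 of the main proof, so that $C_{G}(N),\,C'_{G}(N)\le \epsilon$ automatically. The coefficients $c_{n,m}$ and basis families $\varphi,\psi$ are exactly those supplied by Assumption~\ref{ass:L2}, so the non-local kernel operators $K_{N}^{(\ell)}$ appearing in the definition are well-defined and their action reproduces the integral operator with kernel $G_{N}$.

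Next I would describe the architecture. The input layer is set up so that $\hat{u}^{(1)}=S_{\mathcal{L},N}[u_{0}]$: take $b^{(0)}_{N}=0$ and $C^{(0)}_{n,m}=c_{n,m}$ from the expansion~\eqref{eq.expansion2} of $G(t,x,y)$, which gives
\[
(K^{(0)}_{N}u_{0})(t,x)=\sum_{m,n\in\Lambda_{N}}c_{n,m}\langle\psi_{m}(0,\cdot),u_{0}\rangle\varphi_{n}(t,x)=S_{\mathcal{L},N}[u_{0}](t,x).
\]
After this, I would group the subsequent hidden layers into $J$ identical \emph{iteration blocks}, each realizing one application of $\Phi_{N_{k},net}$. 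A single block is a composition of (a) an inner ReLU subnetwork that applies $F_{net}$ component-wise in the iterate coordinate (with every $K^{(\ell)}_{N}$ set to zero in these sub-layers, so that the layer acts pointwise in $(t,x)$), followed by (b) one non-local sub-layer that applies $K^{(\ell)}_{N}$ with coefficients $C^{(\ell)}_{n,m}=c_{n,m}$ taken from the expansion~\eqref{eq.expansion1} to the output of $F_{net}$, and simultaneously uses $W^{(\ell)}$ to reinject the source term $S_{\mathcal{L},N}[u_{0}]$ that is carried forward. Since $S_{\mathcal{L},N}[u_{0}](t,x)$ is sign-changing, I would propagate it through the ReLU activations by storing its positive and negative parts on two auxiliary coordinates and using the identity $x=\mathrm{ReLU}(x)-\mathrm{ReLU}(-x)$ together with an appropriate $W^{(\ell)}$ to copy these across sub-layers. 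The final output layer, which has no activation, uses $W^{(L)}$ to project onto the iterate coordinate, giving $d_{L+1}=1$ and $\hat{u}^{(L+1)}=\Phi_{N_{k},net}^{[J]}[0]=\Gamma(u_{0})$.

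With the construction fixed, the counting is straightforward. Each iteration block contributes $L(F_{net})+O(1)$ hidden layers and $O(H(F_{net}))$ neurons per coordinate, so by the bounds \eqref{eq:quanti-esti-nonlinearity} and $J=\lceil\log(\epsilon^{-1})/\log(\delta^{-1})\rceil$,
\[
L(\Gamma)\le J\bigl(L(F_{net})+O(1)\bigr)\lesssim \log(\epsilon^{-1})\cdot\log(\epsilon^{-1})=(\log(\epsilon^{-1}))^{2},
\]
\[
H(\Gamma)\le J\cdot O(H(F_{net}))\lesssim \log(\epsilon^{-1})\cdot\epsilon^{-1}\log(\epsilon^{-1})=\epsilon^{-1}(\log(\epsilon^{-1}))^{2},
\]
matching the stated estimates with a constant depending only on $\nu,M,F,D,p,d,r,s,R,\mathcal{L}$.

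The main obstacle is the rigid format of Definition~\ref{def:neural-operator}: the bias $b^{(\ell)}$ in the hidden layers is only a constant vector in $\mathbb{R}^{d_{\ell}}$, so the $(t,x)$-dependent source term $S_{\mathcal{L},N}[u_{0}]$ cannot be added as a layer bias and must instead be threaded through every ReLU activation as stored hidden-state coordinates, using the $\mathrm{ReLU}$ doubling trick to preserve its sign and the weight matrices $W^{(\ell)}$ to copy it forward. Once this book-keeping is set up—which only inflates the width by a constant factor per carried coordinate—the remainder of the argument is a direct identification of the neural-operator coefficients with those of the Green function expansion provided by Assumption~\ref{ass:L2}, followed by the depth/width accounting above.
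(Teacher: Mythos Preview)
Your proposal is correct and follows essentially the same strategy as the paper: carry the source term $S_{\mathcal{L},N}[u_0]$ forward as auxiliary hidden coordinates, realize $F_{net}$ by a sequence of purely local sub-layers ($K^{(\ell)}_{N}=0$), insert one non-local layer per Picard step with coefficients $c_{n,m}$ from Assumption~\ref{ass:L2}, and then count $L(\Gamma)\lesssim J\cdot L(F_{net})$, $H(\Gamma)\lesssim J\cdot H(F_{net})$. The paper packages this as a two-channel state $(v_{j},\tilde u_{1})$ with $\tilde F_{net}(u_{1},u_{2})=(F_{net}(u_{1}),u_{2})$ and a note that ReLU networks represent the identity, whereas you make the identity explicit via the $\mathrm{ReLU}(x)-\mathrm{ReLU}(-x)$ doubling; these are the same device, differing only in bookkeeping (two vs.\ three carried coordinates), and both yield the stated bounds.
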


\begin{proof}

Let $N \geq N_k$. Then we have 
\begin{align*}
&
C_{G}(N) \leq C_{G}(N_k) \leq \epsilon,
\\
&
C'_{G}(N) \leq C'_{G}(N_k) \leq \epsilon.
\end{align*}
We see that
\begin{align*}
&
\Phi_{N_k, net}[u](t,x)\\
& = 
\int_{D}G_{N_k}(t,x,y)u_0(y)dy
+
\int_{0}^{t}\int_{D}G_{N_k}(t-\tau,x,y)F_{net}(u(\tau,y))d\tau dy
\\
&
= 
\sum_{n,m\in \Lambda_{N_k}} c_{n,m} \langle \psi_{m}(0, \cdot), u_0 \rangle \varphi_n(t,x)
+
\sum_{n,m\in \Lambda_{N_k}} c_{n,m} \langle \psi_{m}, F_{net}(u) \rangle \varphi_n(t,x)
\\
&
= 
\sum_{n,m\in \Lambda_N} \tilde{c}_{n,m} \langle \psi_{m}(0, \cdot), u_0 \rangle \varphi_n(t,x)
+
\sum_{n,m\in \Lambda_N} \tilde{c}_{n,m} \langle \psi_{m}, F_{net}(u) \rangle \varphi_n(t,x),
\end{align*}
where 
\[
\tilde{c}_{n,m} 
:=
\begin{dcases}
c_{n,m} & \text{if }n,m \in \Lambda_{N_k},\\
0 & \text{otherwise}.
\end{dcases}
\]
We denote by
\[
\tilde{u}_1(t,x):=\sum_{n,m\in \Lambda_N} \tilde{c}_{n,m} \langle \psi_{m}(0, \cdot), u_0 \rangle \varphi_n(t,x).
\]
Then we see that 
\begin{equation*}
\Gamma(u_0)(t,x) := \Phi^{[J]}_{N_k, net}[0](t, x) = v_{J}(t,x),
\end{equation*}
where $v_{0}:=0$ and 
\begin{align*}
&
v_{j+1}(t,x)
:=
\tilde{u}_1(t,x)
+
\sum_{n,m\in \Lambda_N} \tilde{c}_{n,m} \langle \psi_{m}, F_{net}(v_{j}) \rangle \varphi_n(t,x), \quad j=0,\cdots,J-1.
\end{align*}
We define $\tilde{K}^{(0)}_N : L^{\infty}(D) \to L^{r}(0,T ; L^{s}(D))^2$ by 
\[
(\tilde{K}^{(0)}_{N}u_0)(t,x) :=
\sum_{n,m\in \Lambda_N}
\tilde{C}_{n,m}^{(0)}\langle \psi_{m}(0, \cdot), u_0 \rangle
\varphi_{n}(t, x)
=
\begin{pmatrix}
\tilde{u}_1(t,x) \\
\tilde{u}_1(t,x)
\end{pmatrix},
\]
where $\tilde{C}_{n,m}^{(0)} = \begin{pmatrix}
\tilde{c}_{n,m} \\
\tilde{c}_{n,m}
\end{pmatrix}
\in \mathbb{R}^{2 \times 1}$.
We also define $\tilde{b}^{(0)}_{N} \in L^{r}(0,T ; L^{s}(D))^2$ by 
\[
\tilde{b}^{(0)}_{N}(t,x) :=
 \begin{pmatrix}
\displaystyle\sum_{n,m\in \Lambda_N}
\tilde{c}_{n,m}^{(0)}\langle \psi_{m}, F_{net}(0) \rangle
\varphi_{n}(t, x)  \\
0
\end{pmatrix}
=
\sum_{n\in \Lambda_N}
\tilde{b}_{N,n}^{(0)}
\varphi_{n}(t, x), 
\]
where
\[
\tilde{b}_{N,n}^{(0)} = \begin{pmatrix}
\displaystyle
\tilde{c}_{n,m}^{(0)}\langle \psi_{m}, F_{net}(0) \rangle \\
0
\end{pmatrix}
\in \mathbb{R}^{2 \times 1}.
\]
Then, we see that
\[
(\tilde{K}^{(0)}_{N}u_0)(t,x) + \tilde{b}^{(0)}_{N}(t,x)
= 
\begin{pmatrix}
v_{1}(t,x) \\
\tilde{u}_{1}(t,x)
\end{pmatrix},
\]
which corresponds to the input layer.

Next, we define
\[
\tilde{W} = \begin{pmatrix}
0 & 1 \\
0 & 1
\end{pmatrix}
\in \mathbb{R}^{2 \times 2}
\]
and 
\[ 
(\tilde{K}_{N}u)(t,x) :=
\sum_{n,m\in \Lambda_N} 
\tilde{C}_{n,m} \langle \psi_{m}, u \rangle 
\varphi_{n}(t, x) 
= 
\begin{pmatrix}
\displaystyle\sum_{n,m\in \Lambda_N} 
\tilde{c}_{n,m} \langle \psi_{m}, u_1 \rangle 
\varphi_{n}(t, x) \\
0
\end{pmatrix}
\]
for $u = (u_1, u_2) \in L^{r}(0,T ; L^{s}(D))^2$,
where $\tilde{C}_{n,m} = \begin{pmatrix}
\tilde{c}_{n,m} & 0 \\
0 & 0
\end{pmatrix}
\in
\mathbb{R}^{2 \times 2}
$.
We also define $\tilde{F}_{net} : \mathbb{R}^2 \to \mathbb{R}^2$ by
\[
\tilde{F}_{net}(u)
= 
\begin{pmatrix}
F_{net}(u_{1}) \\
u_2
\end{pmatrix}, \quad
u=(u_1, u_2) \in \mathbb{R}^2,
\]
which is a ReLU neural network with \eqref{eq:quanti-esti-nonlinearity} (Note that ReLU neural networks represent the identity map).
Then we write
\begin{align*}
\left[(\tilde{W} + \tilde{K}_{N}) \circ \tilde{F}_{net} 
\begin{pmatrix}
v_{j} \\
\tilde{u}_{1}
\end{pmatrix}
\right]
(t,x)
&
=
\tilde{W} \begin{pmatrix}
F_{net}(v_{j})(t,x) \\
\tilde{u}_{1}(t,x)
\end{pmatrix}
+ 
\tilde{K}_{N}
\begin{pmatrix}
F_{net}(v_{j}) \\
\tilde{u}_{1}
\end{pmatrix}
(t,x)\\
&
=
\begin{pmatrix}
\tilde{u}_1(t,x)
+
\displaystyle \sum_{n,m\in \Lambda_N} \tilde{c}_{n,m} \langle \psi_{m}, F_{net}(v_{j}) \rangle \varphi_n(t,x) \\
\tilde{u}_{1}(t,x)
\end{pmatrix}
\\
&
= 
\begin{pmatrix}
v_{j+1}(t,x) \\
\tilde{u}_{1}(t,x) 
\end{pmatrix}
\, \text{ for } \, j=1,\ldots, J-1.
\end{align*}
which corresponds to the hidden layer.

Finally, denoting by
\[
\tilde{W}' := (1, 0) \in \mathbb{R}^{1\times 2},
\]
which corresponds to the last layer, we obtain 
\begin{align*}
&\Gamma(u_0) 
=
\tilde{W}' 
\circ
\left[
\underbrace{(\tilde{W} + \tilde{K}_{N}) \circ \tilde{F}_{net} \circ \cdots \circ (\tilde{W} + \tilde{K}_{N}) \circ \tilde{F}_{net}}_{J-1}
\right]
\circ
(\tilde{K}^{(0)}_{N} + \tilde{b}^{(0)}_{N})(u_0)
\end{align*}
By the above construction, we can check that $\Gamma \in \mathcal{NO}^{L, H, ReLU}_{N, \varphi, \psi}$.
Moreover, the depth $L(\Gamma)$ and the number $H(\Gamma)$ of neurons of the neural operator $\Gamma$ can be estimated as 
\begin{align*}
&
L(\Gamma) \lesssim J \cdot L(F_{net}) \lesssim \log (\epsilon^{-1})\cdot \log_2 (\epsilon^{-1})
\lesssim (\log (\epsilon^{-1}))^2
, \\
&
H(\Gamma) \lesssim J \cdot H(F_{net}) \lesssim \log (\epsilon^{-1})\cdot \epsilon^{-1} \log_2 (\epsilon^{-1} )
\lesssim \epsilon^{- 1} (\log (\epsilon^{-1}) )^2.
\end{align*}
Thus, the proof of Theorem \ref{thm:quantitative-UAT-heat} is complete. 
\end{proof}


\section{Proof of Corollary \ref{cor:longtime}}\label{app:E}

By the triangle inequality, we estimate
\[
 \| u_{\max} - \hat{u}_{\max}\|_{L^{r}(0,\kappa^* T ; L^{s})} 
\le \sum_{\kappa = 1}^{\kappa^*}
 \| {\Gamma^+}^{[\kappa]}(u_0) - \Gamma^{[\kappa]}(u_0) \|_{L^{r}((\kappa-1) T,\kappa T ; L^{s})} .
\]
Therefore, as $\kappa^*$ is finite, it is enough to prove that
\begin{equation}\label{cor2_proof}
\begin{split}
&  \| {\Gamma^+}^{[\kappa]}(u_0) - \Gamma^{[\kappa]}(u_0) \|_{L^{r}((\kappa-1) T,\kappa T ; L^{s})}\\
&  \le C \epsilon
 +
 \|{\Gamma^+}^{[\kappa-1]}(u_0)((\kappa-1) T) -\Gamma^{[\kappa-1]}(u_0) ((\kappa-1) T)\|_{L^\infty}
 \end{split}
\end{equation}
for some $C>0$ and for $\kappa = 1,\ldots, \kappa^*$. 
For $\kappa=1$, 
the estimate \eqref{cor2_proof} is already obtained in Theorem \ref{thm:quantitative-UAT-heat}.
For $\kappa= 2,\ldots, \kappa^*$, we have
\[
\begin{split}
& \| {\Gamma^+}^{[\kappa]}(u_0) - \Gamma^{[\kappa]}(u_0) \|_{L^{r}((\kappa-1) T,\kappa T ; L^{s})} \\
& \le  \| \Gamma^+ ({\Gamma^+}^{[\kappa-1]}(u_0)) - \Gamma^+ (\Gamma^{[\kappa-1]}(u_0)) \|_{L^{r}((\kappa-1) T,\kappa T ; L^{s})} \\
& \qquad +
\|\Gamma^+ (\Gamma^{[\kappa-1]}(u_0)) -
\Gamma( \Gamma^{[\kappa-1]}(u_0)) \|_{L^{r}((\kappa-1) T,\kappa T ; L^{s})}.
\end{split}
\]
By Corollary \ref{cor:LWP} (or Proposition \ref{prop:LWP}), the first term can be estimated as
\[
\begin{split}
& \| \Gamma^+ ({\Gamma^+}^{[\kappa-1]}(u_0)) - \Gamma^+ (\Gamma^{[\kappa-1]}(u_0)) \|_{L^{r}((\kappa-1) T,\kappa T ; L^{s})}\\
& \le C
 \|{\Gamma^+}^{[\kappa-1]}(u_0)((\kappa-1) T) -\Gamma^{[\kappa-1]}(u_0) ((\kappa-1) T)\|_{L^\infty}.
 \end{split}
\]
By Theorem \ref{thm:quantitative-UAT-heat}, the second term in RHS can be estimated by $C \epsilon$ since $\Gamma^{[\kappa-1]}(u_0) \in B_{L^\infty}(R)$.
Thus, the estimates \eqref{cor2_proof} are proved for all $\kappa= 1,\ldots, \kappa^*$.
\qed


\section{Preserving the positivity}\label{app:F}

We define the ReQU activation function by
\[
ReQU(t) := \max\{ 0, t\}^2, \quad t \in \mathbb{R}. 
\]

\begin{corollary}
\label{cor:MP} 
Suppose that $\mathcal{L}$ and $F$ satisfy Assumptions \ref{ass:L1}, and \ref{ass:F}, respectively, and the parameters $r,s$ satisfies the condition \eqref{condi_rs}. 
Assume that $F \in C^1(\mathbb{R}; \mathbb{R})$ is a polynomial and satisfies that 
\begin{equation}
\label{eq:F-positive}
F(t) \lesseqgtr 0 \quad \text{ if  $t \lesseqgtr 0$}. 
\end{equation}
Moreover, assume that there exists $N_0 \in \mathbb{N}$ such that the truncated expansion $G_N$ defined in Assumption~\ref{ass:L2} of the Green function $G$ satisfies 
\begin{equation*}
\label{eq:positivity-G-N}
G_{N}(t, x, y) \geq 0, \quad 0< t < T,\ x,y \in D,
\end{equation*}
for any $N \geq N_0$.
Then, there exists $\Gamma \in \mathcal{NO}^{L, H, ReQU}_{N, \varphi, \psi}$ that the statement in Theorem~\ref{thm:quantitative-UAT-heat} holds such that $\Gamma$ preserve the positivity
, i.e., 
\[
\Gamma(u_0) \lesseqgtr 0 \quad \text{ if $u_0 \in L^{\infty}(D)$  and $u_0 \lesseqgtr 0$}. 
\]
\end{corollary}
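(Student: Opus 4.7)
The plan is to redo the construction of Theorem \ref{thm:quantitative-UAT-heat} verbatim, replacing the ReLU approximation of the nonlinearity in Step~2 of Appendix~\ref{app:D} by an \emph{exact} ReQU representation, and then to verify positivity preservation by induction along the Picard iteration embedded in the neural operator.

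First, I would invoke the well-known fact that ReQU networks realize polynomials exactly. Multiplication is captured by
\[
ts=\tfrac14\bigl[ReQU(t+s)+ReQU(-(t+s))-ReQU(t-s)-ReQU(-(t-s))\bigr],
\]
and iterating this identity produces any polynomial of fixed degree as a ReQU network with constant depth and width. Since $F$ is polynomial, I would therefore take $F_{net}\equiv F$, and in particular $F_{net}(0)=F(0)=0$. The identity channel that carries $\tilde u_1$ unchanged through each hidden layer (as in Lemma~\ref{lem:representation-NO}) is realized via $t=\tfrac{1}{4a}\bigl(ReQU(t+a)-ReQU(-t+a)\bigr)$ on any interval $|t|\le a$ with $a>M'$, which is safe because $\hat u^{(\ell)}\in B_{L^\infty(0,T;L^\infty)}(M')$ throughout the iteration.

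Next, I would replay Lemmas~\ref{lem:map-N}--\ref{lem:representation-NO} with this $F_{net}$. Because $F_{net}=F$ pointwise, the map $\Phi_{N_k,net}$ of \eqref{eq:Phi-N-net} coincides with $\Phi_{N_k}$, so the error in Lemma~\ref{lem:map-N-net} is zero ($C_2=0$). The global bound in Lemma~\ref{lem:step-2} therefore becomes
\[
\|\Gamma^+(u_0)-\Gamma(u_0)\|_{L^r(0,T;L^s)}\le \left(M+\tfrac{C_1}{1-\delta}\right)\epsilon,
\]
which still yields the conclusion of Theorem~\ref{thm:quantitative-UAT-heat}; the parameter counts from Lemma~\ref{lem:representation-NO} only improve since the constant-size ReQU realization of $F$ is absorbed into the overall constant $C$, so the quantitative estimates on $L(\Gamma)$, $H(\Gamma)$, $N(\Gamma)$ are preserved.

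Finally, for positivity, I would argue by induction on the iterates $v_j:=\Phi_{N_k,net}^{[j]}[0]=\Phi_{N_k}^{[j]}[0]$ appearing in Lemma~\ref{lem:representation-NO}. Suppose $u_0\ge 0$. Then $\tilde u_1(t,x)=\int_D G_N(t,x,y)u_0(y)\,dy\ge 0$ because $G_N\ge 0$, and $F_{net}(0)=0$ gives $v_1=\tilde u_1\ge 0$. Assuming $v_j\ge 0$, the sign property \eqref{eq:F-positive} yields $F(v_j)\ge 0$, and hence
\[
v_{j+1}(t,x)=\tilde u_1(t,x)+\int_0^{t}\!\!\int_D G_N(t-\tau,x,y)\,F(v_j(\tau,y))\,dy\,d\tau\ge 0,
\]
so $\Gamma(u_0)=v_J\ge 0$. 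The case $u_0\le 0$ is symmetric, using the other half of \eqref{eq:F-positive}. The main obstacle I foresee is book-keeping: wiring the exact ReQU implementations of $F$ and of the identity channel into the precise layer-by-layer format of Definition~\ref{def:neural-operator} without disturbing the two-component architecture of Lemma~\ref{lem:representation-NO}. Note that nonnegativity of the individual coefficients $c_{n,m}$, $\tilde C_{n,m}$, or of $\varphi_n,\psi_m$ is \emph{not} required; only the scalar outputs $\tilde u_1$ and $v_j$ must be sign-preserved, and this is ensured by the Duhamel integral representation together with $G_N\ge 0$.
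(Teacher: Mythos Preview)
Your proposal is correct and follows essentially the same approach as the paper: both take $F_{net}\equiv F$ via an exact ReQU representation of the polynomial nonlinearity (the paper cites \cite{li2019better} rather than writing out the multiplication/identity formulas), note that this makes $\Phi_{N_k,net}=\Phi_{N_k}$, and then verify sign preservation by exactly the same induction on the iterates $v_j=\Phi_{N_k,net}^{[j]}[0]$ using $G_{N_k}\ge 0$ and \eqref{eq:F-positive}. Your added remarks on the identity channel, the vanishing of $C_2$, and the book-keeping for Definition~\ref{def:neural-operator} are helpful elaborations but do not change the underlying argument.
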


%

\begin{proof}

We can show that there exist $\Gamma \in \mathcal{NO}^{L, H, ReQU}_{N, \varphi, \psi}$ satisfying the statement of Lemma~\ref{lem:step-2} where $G_{N_k} \geq 0$ and $F_{net} : \mathbb{R} \to \mathbb{R}$ is ReQU neural network and exactly represents the polynomial $F$ (see e.g., \cite[Theorem 2.2]{li2019better}). 
Note that $\Gamma(u_0) = \Phi_{N_k, net}^{[J]}[0]$ where $\Phi_{N_k, net}$ is a map defined in \eqref{eq:Phi-N-net} and depends on $u_0$. 
We will show that by induction 
\[
\Phi_{N_k, net}^{[J]}[0] \lesseqgtr 0 \ 
\text{ if } u_0 \lesseqgtr 0.
\]
Let $u_0 \lesseqgtr 0$. 
First, we see that $\Phi_{N_k, net}[0] = \Phi_{N_k, net}^{[1]}[0] \lesseqgtr 0$.  Assume that $\Phi_{N_k, net}^{[j]}[0]\lesseqgtr 0 $ ($1 \leq j \leq J-1$). 
Since $F=F_{net}$ satisfies (\ref{eq:F-positive}), we see that
\begin{align*}
\Phi_{N_k, net}^{[j+1]}[0]
&=
\Phi_{N_k, net}
\left[ 
\Phi_{N_k, net}^{[j]}[0]
\right] 
\\
&= 
\underbrace{\int_{D}G_{N_k}(t,x,y)u_0(y)dy}_{ \lesseqgtr 0}
+
\underbrace{
\int_{0}^{t}\int_{D}G_{N_k}(t-\tau,x,y)F_{net}(\Phi_{N_k, net}^{[j]}[0](\tau, y))d\tau dy
}_{ \lesseqgtr 0}
\lesseqgtr 0.
\end{align*}
Thus, the required result is proved.
\end{proof}

\end{document}